\newtheorem{theorem}{Theorem}[section]
\newtheorem{lemma}[theorem]{Lemma}
\newtheorem{proposition}[theorem]{Proposition}
\newtheorem{corollary}[theorem]{Corollary}
\theoremstyle{definition}
\newtheorem{definition}[theorem]{Definition}
\newtheorem{remark}[theorem]{Remark}
\title{
\textbf{Fibonacci-Driven Recursive Ensembles:\\
Algorithms, Convergence, and Learning Dynamics}\\[6pt]
\large Paper III of the Fibonacci Ensemble Trilogy
}
\author{{\bf Ernest Fokou\'e}\\ School of Mathematics and Statistics\\
Rochester Institute of Technology\\
{\tt epfeqa@rit.edu}}
\date{}
\begin{document}

\maketitle

\begin{abstract}
This paper develops the algorithmic and dynamical foundations of recursive ensemble learning driven by Fibonacci-type update flows. In contrast with classical boosting \cite{Freund1997AdaBoost, Friedman2001GBM}, where the ensemble evolves through first-order additive updates, we study second-order recursive architectures in which each predictor depends on its two immediate predecessors. These Fibonacci flows induce a learning dynamic with memory, allowing ensembles to integrate past structure while adapting to new residual information.

We introduce a general family of recursive weight-update algorithms encompassing Fibonacci, tribonacci, and higher-order recursions, together with continuous-time limits that yield systems of differential equations governing ensemble evolution. We establish global convergence conditions, spectral stability criteria, and non-asymptotic generalization bounds under Rademacher \cite{BartlettMendelson2002Rademacher} and algorithmic stability analyses. The resulting theory unifies recursive ensembles, structured weighting, and dynamical systems viewpoints in statistical learning.

Experiments with kernel ridge regression \cite{rasmussen2006gaussian}, spline smoothers \cite{wahba1990spline}, and random Fourier feature models \cite{RahimiRecht2007RFF} demonstrate that recursive flows consistently improve approximation and generalization beyond static weighting. These results complete the trilogy begun in Papers I and II: from Fibonacci weighting, through geometric weighting theory, to fully dynamical recursive ensemble learning systems.
\end{abstract}

\tableofcontents

\section{Introduction and Position in the Trilogy}
\label{sec:introduction}

Ensemble learning has traditionally been understood as the construction of a predictor by aggregating a collection of base learners, typically through additive or convex combinations \cite{Breiman1996Bagging, Breiman2001RandomForests, Wolpert1992Stacked}. Classical examples include bagging \cite{Breiman1996Bagging}, boosting \cite{Freund1997AdaBoost}, random forests \cite{Breiman2001RandomForest}, stacking \cite{Wolpert1992Stacked}, and model averaging \cite{HoetingMadiganRafteryVolinsky1999BMA}, each of which treats the ensemble primarily as a static object assembled from individually trained components. In these methods, the aggregation mechanism is usually first-order: the new ensemble at iteration $t+1$ depends only on the previous ensemble $F_t$ and the new base learner $h_t$.

In contrast, the present work develops a fundamentally \emph{second-order} perspective in which the ensemble evolves recursively with memory. The central idea is that the predictor at time $t+1$ should depend not only on the current ensemble $F_t$ but also on its predecessor $F_{t-1}$, producing a Fibonacci-type recursion of the form
\[
F_{t+1} = \beta_t F_t + \gamma_t F_{t-1} + \eta_t h_t,
\]
where $(\beta_t)$, $(\gamma_t)$, and $(\eta_t)$ are algorithmically determined weight sequences and $h_t$ denotes a base learner trained on residuals or other informative statistics. This second-order dependence endows the ensemble with memory, momentum, and recursive structure, transforming it from a static aggregate \citep{fokoue2025fibonacci, fokoue2025weighting} into a genuine dynamical system in function space.

\subsection{Position of the Present Paper within the Trilogy}

This paper forms the third and final part of a trilogy on Fibonacci ensembles.

\begin{itemize}
\item \textbf{Paper I} introduced the foundational concept of \emph{Fibonacci ensembles}: ensembles constructed through recursive weighting schemes inspired by the Fibonacci sequence and its generalizations \cite{Koshy2001Fibonacci}. The emphasis there was on recursion formulas, golden-ratio structure, and explicit weighting families.

\item \textbf{Paper II} developed the \emph{geometric and approximation-theoretic} underpinnings of such ensembles. There the focus shifted from explicit recursions to the geometry of weighted approximation, convex cones generated by base learners, and the manner in which structured weighting reshapes the hypothesis space beyond mere variance reduction.

\item \textbf{Paper III}—the present contribution—constitutes the algorithmic and dynamical culmination of the trilogy. Here, recursive ensembles are treated explicitly as discrete-time dynamical systems driven by second-order recursions of Fibonacci type. We introduce full algorithms, analyze convergence, establish spectral stability conditions, and derive generalization bounds via Rademacher complexity \cite{BartlettMendelson2002Rademacher} and algorithmic stability.
\end{itemize}

Thus, the trajectory of the trilogy moves from \emph{structure} (recursive weights), through \emph{geometry} (approximation cones), to \emph{dynamics} (learning flows).

\subsection{From Static Aggregation to Dynamical Systems}

The viewpoint advocated in this paper is that an ensemble learner should not be regarded merely as a weighted sum of base models, but rather as the state of a dynamical process evolving in a hypothesis space. Classical boosting methods implement a first-order recursion of the form
\[
F_{t+1} = F_t + \eta_t h_t,
\]
which corresponds to a discretized gradient-descent dynamic \cite{Friedman2001GBM}. Our Fibonacci-inspired update introduces an additional dependence on $F_{t-1}$, producing a second-order learning dynamic analogous to heavy-ball or momentum methods in optimization and to accelerated flows in continuous-time dynamical systems.

This additional order of recursion has several consequences:
\begin{enumerate}
\item it introduces memory into the learning process,
\item it stabilizes or accelerates convergence depending on spectral conditions,
\item it results in richer approximation trajectories than purely additive updates,
\item it allows the golden ratio \cite{Livio2002GoldenRatio} to emerge as a natural stability threshold.
\end{enumerate}

\subsection{Contributions of the Present Paper}

The main contributions of this paper are as follows:

\begin{enumerate}
\item We formalize \emph{recursive ensemble flows} of Fibonacci type in reproducing kernel Hilbert spaces (RKHS), providing a state-space formulation and complete spectral analysis of the associated recursion operators.

\item We introduce several new algorithms, including \emph{Fibonacci boosting}, \emph{Rao--Blackwellized Fibonacci flows} \cite{Rao1945Information, Blackwell1947Conditional}, and \emph{orthogonalized recursive ensembles}, together with an adaptive golden-ratio step-size policy.

\item We prove convergence of the recursive ensemble sequence under spectral-radius conditions on the companion matrix and weak-learning assumptions on the base learners, and we show that the limit predictor minimizes a Tikhonov-regularized empirical risk \cite{Tsybakov2009Nonparametric}.

\item We establish non-asymptotic generalization bounds for Fibonacci ensembles using both Rademacher-complexity techniques \cite{BartlettMendelson2002Rademacher} and algorithmic-stability arguments, explicitly linking stability thresholds to golden-ratio constraints on the recursion coefficients.

\item Through extensive computational experiments with kernel ridge regression \cite{rasmussen2006gaussian}, spline smoothers \cite{wahba1990spline}, and random Fourier feature models \cite{RahimiRecht2007RFF}, we demonstrate that recursive ensemble flows improve approximation accuracy and generalization relative to both static ensembles and classical first-order boosting methods.
\end{enumerate}

\subsection{Organization of the Paper}

Section~2 introduces recursive ensemble flows, their matrix representation, and their continuous-time limits. Section~3 presents the proposed algorithms in detail. Section~4 establishes convergence and stability results. Section~5 develops generalization bounds based on complexity and stability. Section~6 provides computational demonstrations, and Section~7 offers a conceptual coda completing the Fibonacci trilogy.

\section{Recursive Ensemble Flows in Hilbert Spaces}
\label{sec:recursive-flows}

In this section we introduce the formal setting for Fibonacci-driven recursive ensembles. We begin by specifying the learning problem, the function space, and the loss. We then present the core recursion, develop its state-space representation, and analyze the spectral structure of the recursion operator. Finally, we derive continuous-time limits, thereby revealing the dynamical-systems viewpoint that underlies the rest of the paper.

\subsection{Problem Setup and Notation}

Let $(\mathcal{X},\mathcal{A})$ be an input space and $\mathcal{Y}\subseteq\mathbb{R}$ be an output space. We observe a dataset
\[
\mathcal{D}_n=\{(x_i,y_i)\}_{i=1}^n
\]
drawn i.i.d.\ from an unknown distribution $P$ on $\mathcal{X}\times\mathcal{Y}$. We work in a separable reproducing kernel Hilbert space (RKHS) $(\mathcal{H},\langle\cdot,\cdot\rangle_{\mathcal{H}})$ with reproducing kernel $K$ and norm $\|\cdot\|_{\mathcal{H}}$. For $f\in\mathcal{H}$ we define the empirical risk
\[
\widehat{R}_n(f) = \frac{1}{n}\sum_{i=1}^n \ell\big(f(x_i),y_i\big),
\]
where $\ell:\mathbb{R}\times\mathcal{Y}\to[0,\infty)$ is a convex loss function, typically assumed $L$-Lipschitz in its first argument.

The goal of ensemble learning is to construct a predictor $F_T$ after $T$ base-learner iterations, where each base learner is trained using information derived from previous iterations such as residuals, gradients, or subgradients.

\subsection{Recursive Ensemble Update}

The central object of study in this paper is the second-order recursive ensemble update
\begin{equation}\label{eq:main-recursion}
F_{t+1} = \beta_t F_t + \gamma_t F_{t-1} + \eta_t h_t, \qquad t\ge 1,
\end{equation}
initialized by $F_0=0$ and $F_1=\eta_0 h_0$. Here:
\begin{itemize}
\item $F_t\in\mathcal{H}$ is the ensemble predictor at iteration $t$,
\item $h_t\in\mathcal{H}$ is the base learner produced at iteration $t$,
\item $(\beta_t)$, $(\gamma_t)$, $(\eta_t)$ are deterministic or data-dependent scalar sequences.
\end{itemize}

The base learner $h_t$ is typically trained on a residual or pseudo-residual quantity defined by
\[
r_t(x_i) = -\frac{\partial}{\partial z}\, \ell\big(z,y_i\big)\bigg\rvert_{z=F_t(x_i)}, \qquad i=1,\dots,n,
\]
so that $h_t$ approximates a negative functional gradient step. However, our analysis does not depend on a particular construction, only on a weak-learning and boundedness condition to be specified later.

\subsection{Special Cases: Fibonacci, Tribonacci, and Beyond}

Update~\eqref{eq:main-recursion} encompasses a hierarchy of recursive ensembles.

\paragraph{Fibonacci ensembles.}
The Fibonacci case corresponds to the simple choice
\[
\beta_t = 1, \qquad \gamma_t = 1,
\]
with a tunable step size $\eta_t$. The recursion then becomes
\[
F_{t+1}=F_t+F_{t-1}+\eta_t h_t,
\]
which parallels the classical Fibonacci sequence \cite{Koshy2001Fibonacci} but now in a function space.

\paragraph{Tribonacci and higher-order recursions.}
Higher-order recursions arise by allowing dependence on more history terms, such as
\[
F_{t+1} = \alpha_t F_t + \beta_t F_{t-1} + \gamma_t F_{t-2} + \eta_t h_t,
\]
or, more generally,
\[
F_{t+1} = \sum_{k=0}^{m-1} \theta_{t,k} F_{t-k} + \eta_t h_t,
\]
which yields $m$th-order ensemble flows. The present paper focuses on the second-order case, which already captures memory, momentum, and the emergence of the golden ratio.

\subsection{State-Space Representation}

To analyze the recursion we introduce the state vector
\[
\mathbf{Z}_t = \begin{pmatrix} F_t \\ F_{t-1} \end{pmatrix} \in\mathcal{H}\times\mathcal{H}.
\]
Then the update~\eqref{eq:main-recursion} can be written as a linear state-space system
\[
\mathbf{Z}_{t+1} = A_t \mathbf{Z}_t + B_t h_t,
\]
where
\[
A_t = \begin{pmatrix} \beta_t & \gamma_t \\ 1 & 0 \end{pmatrix}, \qquad B_t = \begin{pmatrix} \eta_t \\ 0 \end{pmatrix}.
\]

The matrix $A_t$ is the \emph{companion matrix} associated with the second-order recursion. Its spectral properties determine the growth, boundedness, or decay of the homogeneous solution (i.e., when $h_t\equiv 0$), and they play a crucial role in our convergence theory.

\subsection{Spectral Analysis and Stability Radius}

Consider first the time-homogeneous case where $\beta_t=\beta$ and $\gamma_t=\gamma$. The characteristic polynomial of $A$ is
\[
\lambda^2-\beta\lambda-\gamma=0,
\]
whose roots are
\[
\lambda_{\pm} = \frac{\beta\pm\sqrt{\beta^2+4\gamma}}{2}.
\]

\begin{definition}[Stability radius]
The stability radius of the recursion is the spectral radius
\[
\rho(A) = \max\big\{|\lambda_+|,\,|\lambda_-|\big\}.
\]
\end{definition}

\begin{proposition}[Linear stability of the homogeneous recursion]
If $\rho(A)<1$, then for $h_t\equiv 0$ the sequence $(F_t)$ converges in $\mathcal{H}$ to zero for any initialization $(F_0,F_1)$.
\end{proposition}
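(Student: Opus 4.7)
The plan is to reduce the statement to a textbook bound on powers of a constant $2\times 2$ matrix and then lift the result back to the Hilbert-space setting. First I would specialize the state-space form $\mathbf{Z}_{t+1} = A_t \mathbf{Z}_t + B_t h_t$ from the excerpt to the homogeneous, time-homogeneous case ($h_t \equiv 0$, $A_t \equiv A$), obtaining the closed form $\mathbf{Z}_t = A^{t-1}\mathbf{Z}_1$. Equipping $\mathcal{H}\times\mathcal{H}$ with the natural product norm $\|(f,g)\|^2 = \|f\|_\mathcal{H}^2 + \|g\|_\mathcal{H}^2$, the matrix $A$ acts diagonally on the two $\mathcal{H}$-factors via its scalar entries, and its operator norm on $\mathcal{H}\times\mathcal{H}$ coincides with its Euclidean operator norm on $\mathbb{R}^2$; the same identity holds for every power $A^t$ (via the canonical isometry $\mathcal{H}\otimes\mathbb{R}^2 \cong \mathcal{H}\times\mathcal{H}$, or by applying the singular value decomposition factor-wise). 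This reduces the whole question to controlling $\|A^t\|_{\mathrm{op}}$ in finite dimension.

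Next I would bound $\|A^t\|_{\mathrm{op}}$ using the hypothesis $\rho(A) < 1$. The cleanest route is Gelfand's formula $\rho(A) = \lim_{t\to\infty} \|A^t\|_{\mathrm{op}}^{1/t}$: picking any $\epsilon > 0$ with $\rho(A) + \epsilon < 1$ gives $\|A^t\|_{\mathrm{op}} \le C (\rho(A)+\epsilon)^t$ for all sufficiently large $t$. Alternatively one may argue directly from the spectral decomposition of the companion matrix, splitting into the case $\lambda_+ \neq \lambda_-$ (where $A = P D P^{-1}$ is diagonalizable and $\|A^t\|_{\mathrm{op}} \le \kappa(P)\rho(A)^t$) and the degenerate case $\lambda_+ = \lambda_- = \lambda$ (where a Jordan block yields $\|A^t\|_{\mathrm{op}} \le C(1+t)|\lambda|^t$). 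Either variant produces $\|A^t\|_{\mathrm{op}} \to 0$.

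Combining the two observations and using $\|F_t\|_\mathcal{H} \le \|\mathbf{Z}_t\|_{\mathcal{H}\times\mathcal{H}}$ then gives
\[
\|F_t\|_\mathcal{H} \le \|A^{t-1}\|_{\mathrm{op}}\,\|\mathbf{Z}_1\|_{\mathcal{H}\times\mathcal{H}} \longrightarrow 0,
\]
for any initialization $(F_0,F_1)$, which is the desired conclusion.

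There is no genuine obstacle here; the argument is a standard linear-stability fact, essentially a lift of the finite-dimensional Gelfand/Jordan bound to a tensor-product Hilbert space. The only technicality worth writing carefully is the norm-preservation step identifying the operator norm of a scalar $2\times 2$ matrix acting diagonally on $\mathcal{H}\times\mathcal{H}$ with its Euclidean operator norm on $\mathbb{R}^2$; once this is granted, the proof reduces to quoting Gelfand's formula.
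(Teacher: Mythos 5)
Your proposal is correct and follows essentially the same route the paper takes: the paper gives no dedicated proof of this proposition, but the identical argument --- the state-space form $\mathbf{Z}_{t+1}=A\mathbf{Z}_t$ combined with the Gelfand-type power bound $\|A^k\|\le C\rho^k$ for some $\rho\in(0,1)$ when $\rho(A)<1$ --- appears verbatim in Part (i) of the proof of Theorem~\ref{thm:main} and in the Power Bounds proposition of Appendix~\ref{app:spectral}. Your extra care in identifying the operator norm of the scalar matrix acting on $\mathcal{H}\times\mathcal{H}$ with its Euclidean operator norm on $\mathbb{R}^2$ is a detail the paper glosses over, but it is a correct and welcome addition rather than a deviation.
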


Thus, in the inhomogeneous case, the stability of the recursion is governed jointly by:
\begin{itemize}
\item the spectral radius $\rho(A_t)$ of the companion matrices,
\item the magnitude and structure of the forcing terms $(\eta_t h_t)$.
\end{itemize}

The Fibonacci choice $(\beta,\gamma)=(1,1)$ yields
\[
\lambda_{\pm}=\frac{1\pm\sqrt{5}}{2},
\]
so that the golden ratio
\[
\varphi=\frac{1+\sqrt{5}}{2}
\]
appears naturally as the dominant eigenvalue of the recursion. Controlling the effect of this eigenvalue through step-size scheduling and regularization produces the \emph{golden-ratio stability thresholds} analyzed later in the paper.

\subsection{Continuous-Time Limit and Differential Equations}
To reveal the dynamical nature of the recursion, we study the small-step limit. Let $\Delta t>0$ denote a time step, and assume that the coefficients scale as
\[
\beta_t = 1 + a\,\Delta t + o(\Delta t), \qquad \gamma_t = b\,\Delta t + o(\Delta t), \qquad \eta_t = c\,\Delta t + o(\Delta t).
\]

Writing $F_t = F(t\Delta t)$ and letting $\Delta t\to 0$ yields a second-order differential equation in $\mathcal{H}$ of the general form
\[
\frac{d^2 F}{dt^2} = a\,\frac{dF}{dt} + b\,F + c\,G(t),
\]
where $G(t)$ represents the continuous-time limit of the sequence of base learners. This equation is the functional analog of the heavy-ball or momentum method in optimization, but now driven by data-dependent forcing terms learned from residuals.

\begin{remark}
The continuous-time viewpoint clarifies the interpretation of Fibonacci ensembles as \emph{learning flows} in function space, with inertia, acceleration, and forcing, rather than as mere iterated weighted sums of predictors.
\end{remark}

The remainder of the paper builds directly on this formulation: Section~3 specifies concrete algorithms that instantiate recursion~\eqref{eq:main-recursion}, Section~4 studies their convergence properties, and Section~5 derives corresponding generalization bounds.

\section{Algorithms for Fibonacci-Driven Recursive Ensembles}
\label{sec:algorithms}

In this section we instantiate the recursive flow
\[
F_{t+1}=\beta_t F_t+\gamma_t F_{t-1}+\eta_t h_t
\]
into concrete learning algorithms. We assume throughout that base learners are trained on residual-type information, that the loss is convex, and that the learning takes place in an RKHS. Special emphasis is placed on the Fibonacci choice $(\beta_t,\gamma_t)=(1,1)$ and on adaptive strategies in which the golden ratio plays a stabilizing role.

\subsection{Base Learners and Residual Construction}

Given the ensemble $F_t$ at iteration $t$, we define empirical residuals by
\[
r_{t,i} = -\frac{\partial}{\partial z} \ell\Big(z,y_i\Big)\Big\rvert_{z=F_t(x_i)}, \qquad i=1,\dots,n.
\]

A base learner is obtained by fitting $r_t$ in a hypothesis class $\mathcal{H}$:
\[
h_t \approx \arg\min_{h\in\mathcal{H}} \frac{1}{n}\sum_{i=1}^n \big(r_{t,i}-h(x_i)\big)^2 + \lambda \|h\|_{\mathcal{H}}^2,
\]
though the algorithms below only require a weak-learning condition and not exact minimization.

\subsection{Algorithm 1: Fibonacci Boosting}

The basic Fibonacci ensemble algorithm uses the update
\[
F_{t+1}=F_t+F_{t-1}+\eta_t h_t
\]
with $F_0=0$ and $F_1=\eta_0 h_0$.

\begin{algorithm}[H]
\caption{Fibonacci Boosting}
\label{alg:fibonacci-boosting}
\begin{algorithmic}[1]
\STATE \textbf{Input:} data $\mathcal{D}_n$, loss $\ell$, base learner class $\mathcal{H}$
\STATE Initialize $F_0 = 0$
\STATE Train $h_0$ on $y$ and set $F_1=\eta_0 h_0$
\FOR{$t=1,2,\dots,T-1$}
  \STATE Compute residuals $r_{t,i} = -\partial_z \ell(z,y_i)\rvert_{z=F_t(x_i)}$
  \STATE Train a base learner $h_t$ on $(x_i,r_{t,i})_{i=1}^n$
  \STATE Update $F_{t+1} = F_t + F_{t-1} + \eta_t h_t$
\ENDFOR
\STATE \textbf{Output:} ensemble predictor $F_T$
\end{algorithmic}
\end{algorithm}

This algorithm is the direct analog of AdaBoost-type procedures \cite{Freund1997AdaBoost} but with a second-order dependence that introduces memory and momentum.

\subsection{Algorithm 2: Rao--Blackwellized Fibonacci Flow}

If randomness arises in the base learner (e.g., subsampling or random features), variance can be reduced by conditional expectation with respect to the internal randomness $\xi_t$. Let $h_t(\cdot,\xi_t)$ denote the random base learner. Define
\[
\bar{h}_t(x)=\mathbb{E}[h_t(x,\xi_t)\mid\mathcal{D}_n].
\]
The Rao--Blackwellized algorithm uses $\bar{h}_t$ in the recursion \cite{Rao1945Information, Blackwell1947Conditional}.

\begin{algorithm}[H]
\caption{Rao--Blackwellized Fibonacci Flow}
\label{alg:rao-blackwell-fibonacci}
\begin{algorithmic}[1]
\STATE Same as Algorithm~\ref{alg:fibonacci-boosting}, but at each iteration replace $h_t$ by $\bar{h}_t=\mathbb{E}[h_t(\cdot,\xi_t)\mid\mathcal{D}_n]$.
\end{algorithmic}
\end{algorithm}

This construction produces strictly lower variance while preserving the mean trajectory of the ensemble flow.

\subsection{Algorithm 3: Orthogonalized Recursive Ensemble}

To control drift and improve approximation geometry, one may orthogonalize each new learner against the span of previous ensembles. Let
\[
\mathrm{span}_t = \mathrm{span}\{F_0,F_1,\dots,F_t\} \subseteq \mathcal{H}.
\]
Define
\[
\tilde{h}_t=h_t-P_{\mathrm{span}_t}(h_t),
\]
where $P_{\mathrm{span}_t}$ denotes the orthogonal projection in $\mathcal{H}$.

\begin{algorithm}[H]
\caption{Orthogonalized Recursive Ensemble}
\label{alg:orthogonal-fibonacci}
\begin{algorithmic}[1]
\STATE Run Fibonacci Boosting
\STATE Replace $h_t$ by $\tilde{h}_t=h_t-P_{\mathrm{span}_t}(h_t)$ before updating
\STATE Use $F_{t+1}=F_t+F_{t-1}+\eta_t\tilde{h}_t$.
\end{algorithmic}
\end{algorithm}

This eliminates redundant directions and clarifies the spectral analysis of convergence.

\subsection{Golden-Ratio Adaptive Step Selection}

Let $\varphi=(1+\sqrt{5})/2$ be the golden ratio \cite{Livio2002GoldenRatio}, the dominant eigenvalue of the Fibonacci companion matrix. Define an adaptive rule
\[
\eta_t=\frac{\eta_0}{\varphi^t} \quad\text{or more generally}\quad \eta_t\le \frac{C}{\varphi^t}
\]
for some constant $C>0$. This choice counterbalances the exponential growth induced by the unstable eigenvalue of the homogeneous recursion and is central to our stability analysis.

The next section proves that, under regularity assumptions, the resulting ensemble converges to the minimizer of a regularized empirical risk.

\section{Convergence Theory for Fibonacci Recursive Ensembles}
\label{sec:convergence}

We now state the principal theoretical result of this paper, which establishes convergence and generalization properties for Fibonacci-driven recursive ensembles under natural assumptions.

\begin{theorem}[Convergence and Generalization of Fibonacci Recursive Ensembles]
\label{thm:main}
Let $(\mathcal{H},\|\cdot\|_{\mathcal{H}})$ be a separable RKHS with reproducing kernel $K$ bounded by $K(x,x)\le \kappa^2$. Let $\ell$ be convex and $L$-Lipschitz in its first argument. Consider the recursion
\[
F_{t+1}=\beta F_t+\gamma F_{t-1}+\eta_t h_t
\]
with initialization $F_0=0$, $F_1=\eta_0 h_0$, and suppose:

\begin{enumerate}
\item[(A1)] the companion matrix $A=\begin{pmatrix} \beta & \gamma\\ 1 & 0 \end{pmatrix}$ has spectral radius $\rho(A)<1$;
\item[(A2)] the step sizes satisfy $\sum_{t\ge 0}\eta_t<\infty$ and $\sum_{t\ge 0}\eta_t^2<\infty$;
\item[(A3)] the base learners satisfy a weak-learning and boundedness condition $\|h_t\|_{\mathcal{H}}\le B$ almost surely;
\item[(A4)] $h_t$ is trained on residuals so that $\langle \nabla \widehat{R}_n(F_t),h_t\rangle_{\mathcal{H}}\le -c\|\nabla \widehat{R}_n(F_t)\|$ for some $c>0$.
\end{enumerate}

Then the following conclusions hold:

\begin{enumerate}
\item[(i)] (\textbf{Boundedness}) The sequence $(F_t)$ is bounded in $\mathcal{H}$; indeed $\sup_t \|F_t\|_{\mathcal{H}}<\infty$.
\item[(ii)] (\textbf{Convergence}) $F_t$ converges in $\mathcal{H}$-norm to a unique limit $F^\star$.
\item[(iii)] (\textbf{Risk minimization}) $F^\star$ is the unique minimizer of the Tikhonov-regularized empirical risk
\[
F^\star = \arg\min_{f\in\mathcal{H}} \left\{ \widehat{R}_n(f)+\lambda\|f\|_{\mathcal{H}}^2 \right\}.
\]
\item[(iv)] (\textbf{Generalization}) With probability at least $1-\delta$,
\[
R(F^\star)-R(f_{\mathcal{H}}^\ast) \le C\, \mathfrak{R}_n(\mathcal{F}_T) + O\!\left(\sqrt{\frac{\log(1/\delta)}{n}}\right),
\]
where $\mathfrak{R}_n(\mathcal{F}_T)$ is the Rademacher complexity of the recursively generated hypothesis class and depends on $\rho(A)$ via a geometric term.
\item[(v)] (\textbf{Golden-ratio stability}) In the Fibonacci case $(\beta,\gamma)=(1,1)$, the condition $\rho(A)<1$ is equivalent to a golden-ratio constraint on the effective step sizes; in particular, stability is ensured when the decay of $(\eta_t)$ dominates $\varphi^t$, where $\varphi=(1+\sqrt{5})/2$ is the golden ratio \cite{Livio2002GoldenRatio}.
\end{enumerate}
\end{theorem}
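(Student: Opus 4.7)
My approach is to exploit the state-space lifting introduced in Section~\ref{sec:recursive-flows} and reduce each of (i)--(v) to a consequence of the spectral contraction $\rho(A)<1$ combined with the summability (A2) and the weak-learning inequality (A4). For parts (i) and (ii), I would work with the lifted iterate $\mathbf{Z}_t=(F_t,F_{t-1})^{\top}\in\mathcal{H}\times\mathcal{H}$ and the time-homogeneous companion matrix $A$. Because $\rho(A)<1$, a standard Gelfand-type construction yields an equivalent norm $\|\cdot\|_{\star}$ on the product space under which $\|A\|_{\star}<1$. The Duhamel-type identity
\[
\mathbf{Z}_t=A^{t}\mathbf{Z}_0+\sum_{s=0}^{t-1}A^{t-1-s}B_s h_s
\]
then bounds $\|\mathbf{Z}_t\|_{\star}$ by $\|A\|_{\star}^{t}\|\mathbf{Z}_0\|_{\star}+\kappa B\sum_{s}\|A\|_{\star}^{t-1-s}\eta_s$, a bounded convolution of a geometric and a summable sequence; this settles (i). For (ii) I would show $(\mathbf{Z}_t)$ is Cauchy by splitting $\mathbf{Z}_{t+m}-\mathbf{Z}_t$ into a vanishing homogeneous piece and a forcing piece dominated by tails of $\sum_s\eta_s$, both of which tend to zero; completeness of $\mathcal{H}$ then delivers the limit $F^{\star}$.

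For (iii), the central device is a Lyapunov argument applied to the regularized empirical functional $J_\lambda(f)=\widehat{R}_n(f)+\lambda\|f\|_{\mathcal{H}}^2$, with $\lambda$ calibrated by the spectral gap $1-\rho(A)$. A second-order Taylor expansion of $J_\lambda$ around $F_t$, combined with convexity and Lipschitzness of $\ell$ and with (A4), produces a one-step descent inequality of the schematic form $J_\lambda(F_{t+1})\le J_\lambda(F_t)-c\,\eta_t\|\nabla\widehat{R}_n(F_t)\|+O(\eta_t^2)+\text{(memory residual)}$, in which the memory residual is absorbed using $\rho(A)<1$. Summing over $t$ and invoking (A2) together with the boundedness from (i) yields $\|\nabla\widehat{R}_n(F_t)\|\to 0$; strong convexity of $J_\lambda$ then forces $F^{\star}=\arg\min J_\lambda$, which is (iii).

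For (iv), boundedness from (i) confines every iterate to the ball $\mathcal{B}_M=\{f\in\mathcal{H}:\|f\|_{\mathcal{H}}\le M\}$ with $M$ expressible in closed form through $\rho(A)$, $B$, and $\eta_0$, so $\mathcal{F}_T\subseteq\mathcal{B}_M$ and the standard RKHS Rademacher bound gives $\mathfrak{R}_n(\mathcal{F}_T)\le \kappa M/\sqrt{n}$; combined with the Lipschitz contraction lemma, symmetrization, and a McDiarmid concentration for the loss class, this produces the stated high-probability inequality, with the $\rho(A)$-dependence absorbed into the geometric factor of $M$. For (v), substituting $(\beta,\gamma)=(1,1)$ into the characteristic polynomial yields $\lambda^2-\lambda-1$ with dominant root $\varphi$, so $A^t\mathbf{Z}_0$ grows like $\varphi^t$ and the forcing $\eta_t h_t$ produces a bounded trajectory if and only if $\eta_t$ decays faster than $\varphi^{-t}$, exactly reproducing the adaptive rule of Section~\ref{sec:algorithms}.

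I expect (iii) to be the main obstacle: translating the descent-plus-memory argument into a clean identification of $F^{\star}$ with the Tikhonov minimizer requires showing that the implicit regularization generated by the second-order memory term is genuinely quadratic and that its effective strength coincides with the $\lambda$ attached to $J_\lambda$. The remaining items follow relatively standard spectral and complexity templates once this matching is secured.
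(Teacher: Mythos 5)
Your plan follows essentially the same route as the paper's proof: the state-space lifting with the Duhamel representation and $\rho(A)<1$ for (i)--(ii), a descent inequality on the regularized functional summed against (A2) for (iii), a Rademacher bound on the norm-bounded recursively generated class for (iv), and the explicit eigenvalue computation for (v). The obstacle you flag in (iii) --- where the Tikhonov parameter $\lambda$ actually enters the dynamics --- is genuine, and the paper's own proof also passes over it by simply asserting the descent inequality for $\mathcal{J}(f)=\widehat{R}_n(f)+\lambda\|f\|_{\mathcal{H}}^2$ without deriving the quadratic regularization term from the recursion itself.
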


\begin{proof}[Proof sketch]
The proof proceeds in several steps detailed in the appendices. We first rewrite the recursion in state-space form and derive a representation of $F_t$ as a linear combination of the forcing terms $(\eta_k h_k)$. We then establish boundedness and convergence of $(F_t)$ in $\mathcal{H}$, identify the limit as the unique minimizer of a Tikhonov-regularized empirical risk \cite{Tsybakov2009Nonparametric}, and finally invoke standard generalization tools based on Rademacher complexity \cite{BartlettMendelson2002Rademacher} and algorithmic stability. The golden-ratio stability statement follows from an explicit computation of the eigenvalues of the companion matrix in the Fibonacci case.
\end{proof}

\subsection{Spectral Structure and Golden-Ratio Stability}

\begin{proposition}[Spectral structure of the Fibonacci companion matrix]
\label{prop:fibonacci-spectrum}
Consider the companion matrix
\[
A_{\mathrm{Fib}} = \begin{pmatrix} 1 & 1\\ 1 & 0 \end{pmatrix}.
\]
The eigenvalues of $A_{\mathrm{Fib}}$ are
\[
\lambda_{\pm} = \frac{1\pm\sqrt{5}}{2},
\]
with $\lambda_+ = \varphi = \frac{1+\sqrt{5}}{2} > 1$, $\lambda_- = -\frac{1}{\varphi}$, $|\lambda_-|<1$. The spectral radius is $\rho(A_{\mathrm{Fib}})=\varphi$.
\end{proposition}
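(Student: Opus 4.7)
The plan is to proceed by direct computation of the characteristic polynomial, since $A_{\mathrm{Fib}}$ is a $2\times 2$ matrix. First I would write
\[
\det(A_{\mathrm{Fib}}-\lambda I) \;=\; \det\!\begin{pmatrix} 1-\lambda & 1\\ 1 & -\lambda \end{pmatrix} \;=\; -\lambda(1-\lambda)-1 \;=\; \lambda^2-\lambda-1,
\]
which is precisely the characteristic polynomial of the Fibonacci recursion $\lambda^2-\beta\lambda-\gamma=0$ specialized to $(\beta,\gamma)=(1,1)$, consistent with the general discussion in Section~2.5.

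Next, I would apply the quadratic formula to obtain the two roots
\[
\lambda_{\pm} \;=\; \frac{1\pm\sqrt{1+4}}{2} \;=\; \frac{1\pm\sqrt{5}}{2},
\]
and identify $\lambda_+$ with the golden ratio $\varphi$. To justify the claimed identity $\lambda_- = -1/\varphi$, I would invoke Vieta's formulas: the product of the roots of $\lambda^2-\lambda-1$ equals $-1$, so $\lambda_+\lambda_-=-1$, which rearranges to $\lambda_- = -1/\lambda_+ = -1/\varphi$. Equivalently, this identity follows from the defining relation $\varphi^2=\varphi+1$, which gives $1/\varphi = \varphi-1 = (\sqrt{5}-1)/2$, matching $-\lambda_-$ directly.

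Finally I would compare magnitudes. Since $\varphi>1$ and $|\lambda_-|=1/\varphi<1$, the spectral radius is $\rho(A_{\mathrm{Fib}})=\max\{|\lambda_+|,|\lambda_-|\}=\varphi$. This makes the Fibonacci case marginally unstable in the homogeneous regime, which is exactly why the adaptive $\eta_t \lesssim \varphi^{-t}$ step-size schedule of Section~3.5 is needed to restore stability in the inhomogeneous recursion.

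There is no real obstacle here: the entire statement reduces to solving a quadratic and invoking Vieta. The only point that deserves care is the sign and reciprocal relationship $\lambda_- = -1/\varphi$, which is easy to mis-state if one writes $\lambda_- = 1-\varphi$ without noting the equivalence; I would make both representations explicit so that the downstream use of $\varphi$ as the dominant eigenvalue is unambiguous.
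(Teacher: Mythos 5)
Your proposal is correct and follows essentially the same route as the paper's proof in Appendix~\ref{app:complete-proofs}: compute the characteristic polynomial $\lambda^2-\lambda-1$, solve the quadratic, and compare magnitudes to get $\rho(A_{\mathrm{Fib}})=\varphi$. The only difference is that you justify $\lambda_-=-1/\varphi$ via Vieta's formulas while the paper simply asserts the identity numerically, which is a welcome but minor refinement.
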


\begin{corollary}[Golden-ratio stability threshold]
\label{cor:golden-stability}
Assume $\beta=1$, $\gamma=1$ and suppose that the step sizes $(\eta_t)$ satisfy
\[
\eta_t \le \frac{C}{\varphi^t}
\]
for some constant $C>0$, where $\varphi=(1+\sqrt{5})/2$ is the golden ratio \cite{Livio2002GoldenRatio}. Then the effective recursion for the perturbations induced by the forcing terms admits a companion operator with spectral radius strictly smaller than $1$, and the conclusions of Theorem~\ref{thm:main} hold.
\end{corollary}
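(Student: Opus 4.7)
The plan is to reduce the corollary to Theorem~\ref{thm:main} by a diagonal rescaling that converts the unstable eigenvalue $\varphi$ of the Fibonacci companion matrix into a contractive one, and then to verify that hypotheses (A1)--(A4) transfer to the rescaled system. Concretely, I would start from the state-space representation of Section~\ref{sec:recursive-flows} and use Proposition~\ref{prop:fibonacci-spectrum} to diagonalise $A_{\mathrm{Fib}}=V\Lambda V^{-1}$ with $\Lambda=\mathrm{diag}(\varphi,-1/\varphi)$. The change of coordinates $\mathbf{W}_t=V^{-1}\mathbf{Z}_t$ decouples the recursion into two scalar $\mathcal{H}$-valued flows, a stable one with contraction $1/\varphi$ and an unstable one with growth $\varphi$, each driven by a scalar multiple of $\eta_t h_t$.

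The core step is to show that the hypothesis $\eta_t\le C/\varphi^t$ yields an effective one-step operator on the rescaled unstable mode with norm strictly less than $1$. For this I would set $\tilde w_t^{+}=\varphi^{-t}w_t^{+}$, converting the unstable recursion $w_{t+1}^{+}=\varphi w_t^{+}+\alpha_+\eta_t h_t$ into $\tilde w_{t+1}^{+}=\tilde w_t^{+}+\alpha_+\varphi^{-(t+1)}\eta_t h_t$, whose forcing has $\mathcal{H}$-norm bounded by $|\alpha_+|CB\varphi^{-(2t+1)}$ by (A3) and the decay hypothesis. Absolute summability of this tail makes $(\tilde w_t^{+})$ Cauchy in $\mathcal{H}$, and repackaging both eigencoordinates together shows that the rescaled state $\tilde{\mathbf{Z}}_t$ satisfies a recursion whose effective companion operator has spectral radius at most $1/\varphi<1$, the assertion appearing in the corollary.

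With a contractive effective operator in hand I would verify the hypotheses of Theorem~\ref{thm:main} for the rescaled flow in turn: (A1) is the spectral bound just established; (A2) follows from $\sum_t\varphi^{-t}<\infty$ and $\sum_t\varphi^{-2t}<\infty$; (A3) and (A4) are invariant under a linear rescaling because they are statements about $h_t$ and its inner product with the functional gradient, not about the state vector. Applying Theorem~\ref{thm:main} to the rescaled system yields boundedness, convergence to a limit $\tilde F^\star$, its identification as the Tikhonov minimiser, and the Rademacher generalisation bound; each conclusion then transports back to $F_t$ via the inverse rescaling and the continuity of the linear reparametrisation.

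The main obstacle is item (iii) of Theorem~\ref{thm:main}, the identification of the limit $F^\star$ with the Tikhonov minimiser, because the diagonal rescaling distorts the underlying variational problem and one cannot simply pull back ``being a minimiser'' along a non-isometric linear map. I would handle this by working in the original coordinates once convergence of $F_t$ has been secured by the rescaled argument: combining the weak-learning descent condition (A4) with a Lyapunov argument on $\widehat{R}_n(F_t)+\lambda\|F_t\|_{\mathcal{H}}^2$ to show that any limit point must satisfy the first-order optimality condition $\nabla\widehat{R}_n(F^\star)+2\lambda F^\star=0$ in $\mathcal{H}$, with uniqueness following from strong convexity of the Tikhonov-regularised empirical risk.
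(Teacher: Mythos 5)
Your overall strategy --- rescale the state by $\varphi^{-t}$, observe that the forcing then decays like $\varphi^{-2t}$, and argue summability --- is essentially the rescaling argument the paper itself uses in Part (v) of its proof of Theorem~\ref{thm:main}; the paper's standalone proof of the corollary instead perturbs the eigenvalues of the companion matrix (writing $\tilde\lambda_+=\varphi-\kappa\eta_t+O(\eta_t^2)$), an argument you do not attempt. However, your key quantitative claim is wrong, and the error is not cosmetic. After the substitution $\tilde w_t^{+}=\varphi^{-t}w_t^{+}$ the unstable mode obeys $\tilde w_{t+1}^{+}=\tilde w_t^{+}+\alpha_+\varphi^{-(t+1)}\eta_t h_t$: the homogeneous coefficient is exactly $1$, so the ``effective companion operator'' of the repackaged system is $\varphi^{-1}A_{\mathrm{Fib}}$, with eigenvalues $1$ and $-\varphi^{-2}$ and spectral radius exactly $1$. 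Your assertion that it is at most $1/\varphi<1$ is false, and with it the claim that hypothesis (A1) of Theorem~\ref{thm:main} transfers to the rescaled flow.

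The deeper problem is that convergence of $\tilde w_t^{+}$ does not give you what you need. Your Cauchy argument shows $\tilde w_t^{+}\to\tilde w_\infty^{+}=\tilde w_1^{+}+\alpha_+\sum_{t\ge 1}\varphi^{-(t+1)}\eta_t h_t$, which is generically a nonzero element of $\mathcal{H}$: it is seeded by the initialization $F_1=\eta_0 h_0$, and nothing forces cancellation along the unstable eigenvector. Undoing the rescaling then gives $w_t^{+}=\varphi^{t}\tilde w_t^{+}\sim\varphi^{t}\,\tilde w_\infty^{+}$, so $F_t$ diverges at rate $\varphi^{t}$ rather than converging; the decay $\eta_t\le C\varphi^{-t}$ tames the forcing contribution but does nothing to the homogeneous growth. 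Consequently conclusions (i)--(iii) of Theorem~\ref{thm:main} cannot be recovered by transporting back along the inverse rescaling, and your fallback Lyapunov argument for item (iii) presupposes the very convergence that fails. To be fair, the paper's own arguments share this gap --- Part (v) infers stability of the original sequence from boundedness of the rescaled one, and the corollary's expansion $\varphi-\kappa C\varphi^{-t}+O(\varphi^{-2t})<1$ is false for large $t$ since the left side tends to $\varphi>1$ --- but your proposal does not repair it.
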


\section{Generalization Bounds for Recursive Fibonacci Ensembles}
\label{sec:generalization}

In this section we derive non-asymptotic generalization guarantees for Fibonacci-driven recursive ensembles using Rademacher complexity \cite{BartlettMendelson2002Rademacher} and algorithmic stability.

\begin{theorem}[Generalization of Fibonacci recursive ensembles]
\label{thm:generalization}
Assume the conditions of Theorem~\ref{thm:main}, and let $F_T$ be the output of a Fibonacci recursive ensemble after $T$ iterations. Then for any $\delta\in(0,1)$, with probability at least $1-\delta$ over the draw of $\mathcal{D}_n$,
\begin{equation}
\label{eq:gen-bound-main}
R(F_T) - R(f_{\mathcal{H}}^\ast) \;\le\; C_1 L\,\mathfrak{R}_n(\mathcal{F}_T) + C_2 \beta_T + C_3 \sqrt{\frac{\log(1/\delta)}{n}},
\end{equation}
for universal constants $C_1,C_2,C_3>0$. In particular,
\[
R(F_T)-R(f_{\mathcal{H}}^\ast) \;\lesssim\; \Big(\sum_{k=0}^{\infty} \eta_k\Big)\frac{1}{\sqrt{n}} + \sum_{k=0}^{\infty} \rho^{T-1-k}\eta_k + \sqrt{\frac{\log(1/\delta)}{n}}.
\]
\end{theorem}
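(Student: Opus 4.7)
My plan is to decompose the excess risk in the usual three-way split and control each piece with its own tool: uniform deviation via Rademacher complexity, optimization and stability via Bousquet--Elisseeff applied to the second-order recursion, and the remaining concentration tail via Hoeffding or McDiarmid. Concretely I would write
\[
R(F_T)-R(f_{\mathcal{H}}^\ast) = [R(F_T)-\widehat R_n(F_T)] + [\widehat R_n(F_T)-\widehat R_n(f_{\mathcal{H}}^\ast)] + [\widehat R_n(f_{\mathcal{H}}^\ast)-R(f_{\mathcal{H}}^\ast)],
\]
bound the first bracket by $2L\,\mathfrak{R}_n(\mathcal{F}_T)$ plus a tail through symmetrization and the Ledoux--Talagrand contraction inequality, the third bracket by Hoeffding on the fixed function $f_{\mathcal{H}}^\ast$, and the middle bracket jointly through a stability parameter $\beta_T$ for the Fibonacci flow together with the convergence guarantee of Theorem~\ref{thm:main}. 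These three contributions align with $C_1 L\mathfrak{R}_n(\mathcal{F}_T)$, $C_2\beta_T$, and $C_3\sqrt{\log(1/\delta)/n}$ in \eqref{eq:gen-bound-main}.

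To make the Rademacher term explicit I would first express $F_T$ in state-space form. Unrolling the recursion from Section~\ref{sec:recursive-flows} gives a Green's-function representation
\[
F_T = \sum_{k=0}^{T-1} G_{T,k}\,\eta_k h_k,\qquad G_{T,k} = [A_{T-1}\cdots A_{k+1}B_k]_1,
\]
with $[\cdot]_1$ denoting the first coordinate of the resulting pair. Under (A1) and (A3), each $|G_{T,k}|$ is dominated by $C\rho^{T-1-k}$ up to a polynomial prefactor from Gelfand's formula, so $\|F_T\|_{\mathcal{H}}\le B\sum_k |G_{T,k}|\eta_k$ is summable thanks to (A2). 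The standard kernel Rademacher bound $\mathfrak{R}_n(\mathcal{F}_T)\lesssim \kappa \sup_{f\in\mathcal{F}_T}\|f\|_{\mathcal{H}}/\sqrt n$ then produces the $(\sum_k\eta_k)/\sqrt n$ summand of the ``in particular'' statement.

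For the stability parameter $\beta_T$ I would run coupled trajectories on datasets $\mathcal{D}_n$ and $\mathcal{D}_n^{(i)}$ that differ in a single sample, form the perturbation state $\Delta\mathbf{Z}_t=\mathbf{Z}_t-\mathbf{Z}_t^{(i)}$, and track the linear recursion
\[
\Delta\mathbf{Z}_{t+1} = A_t\,\Delta\mathbf{Z}_t + B_t(h_t-h_t^{(i)}).
\]
Lipschitz continuity of the residual map together with the single-coordinate change gives $\|h_t-h_t^{(i)}\|_{\mathcal{H}}=O(L\kappa/n)$. Applying the same Green's-function bound to this perturbation yields $\|\Delta F_T\|_{\mathcal{H}}\lesssim(L\kappa/n)\sum_k\rho^{T-1-k}\eta_k$, and $L$-Lipschitzness of $\ell$ converts this into uniform replace-one stability $\beta_T\lesssim (L^2\kappa^2/n)\sum_k\rho^{T-1-k}\eta_k$. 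Bousquet--Elisseeff then delivers the matching generalization contribution, which is precisely the second summand of the particularized bound.

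The hard part will be this stability step, for a reason specific to the second-order geometry of Fibonacci flows. Unlike first-order boosting or SGD, where the one-step map is $(1+O(\eta_t))$-Lipschitz directly in $\mathcal{H}$, here the companion dynamics couple $F_t$ and $F_{t-1}$, so one must bound operator-norm products $\|A_{T-1}\cdots A_{k+1}\|_{\mathrm{op}}$ and only then project back onto the $F_t$ coordinate. Gelfand's formula only yields $C_\varepsilon(\rho+\varepsilon)^{T-1-k}$ with $\varepsilon$-dependent constants, and if $\beta_t,\gamma_t$ vary with the data a joint spectral radius argument is needed; alternatively the golden-ratio step-size schedule of Corollary~\ref{cor:golden-stability} can be used to enforce a uniform strict contraction for the perturbed recursion. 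Once this geometric control is in hand, the three ingredients combine by a routine union bound to give \eqref{eq:gen-bound-main}, and substituting the explicit envelopes of $\mathfrak{R}_n(\mathcal{F}_T)$ and $\beta_T$ yields the second displayed inequality of the theorem.
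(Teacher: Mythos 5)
Your proposal follows essentially the same route as the paper's own proof: the same linear-combination (Green's-function) representation of $F_T$ to bound $\mathfrak{R}_n(\mathcal{F}_T)$ by $(\kappa B/\sqrt{n})\sum_k\eta_k$, the same replace-one-sample perturbation recursion $\Delta_{t+1}=A\Delta_t+B_t(h_t-h_t^{(i)})$ yielding $\beta_T\lesssim n^{-1}\sum_k\rho^{T-1-k}\eta_k$, and the same Rademacher-plus-uniform-stability combination at the end. The only minor quibble is that uniform stability controls $R(F_T)-\widehat R_n(F_T)$ rather than the optimization bracket $\widehat R_n(F_T)-\widehat R_n(f_{\mathcal{H}}^\ast)$ where you place it, but the paper itself elides this same term, and your added caution about bounding the operator products $\|A_{T-1}\cdots A_{k+1}\|$ via Gelfand's formula is if anything more careful than the published argument.
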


\begin{proof}[Proof sketch]
The inequality~\eqref{eq:gen-bound-main} is a standard consequence of combining Rademacher complexity control of the loss class with uniform stability \cite{BartlettMendelson2002Rademacher}. The explicit bounds follow from analyzing the linear combination representation of $F_T$ and tracking the propagation of perturbations through the recursive flow.
\end{proof}

\section{Computational Experiments}
\label{sec:experiments}

We now present computational experiments demonstrating the behavior of Fibonacci-driven recursive ensembles across various regression problems and base learners \cite{wahba1990spline, rasmussen2006gaussian, RahimiRecht2007RFF}.

\subsection{Experimental Protocol}

For each scenario we generate data from models of the form $Y = f^\ast(X) + \varepsilon$, where $f^\ast$ is a deterministic target function and $\varepsilon\sim\mathcal{N}(0,\sigma^2)$ is independent noise. We compare:
\begin{enumerate}
\item Static ensemble with Fibonacci weighting (Paper I)
\item First-order boosting \cite{Freund1997AdaBoost, Friedman2001GBM}
\item Fibonacci recursive ensemble (Algorithm 1)
\item Orthogonalized Fibonacci ensemble (Algorithm 3)
\item Rao--Blackwellized Fibonacci flow (Algorithm 2) for randomized base learners
\end{enumerate}

\subsection{Results and Observations}

Across experiments, consistent patterns emerge:
\begin{enumerate}
\item \textbf{Faster effective convergence:} Fibonacci recursion reaches given RMSE levels in fewer iterations than first-order boosting.
\item \textbf{Improved approximation geometry:} Orthogonalized variants allocate distinct directions to different structural components.
\item \textbf{Golden-ratio stability:} Schedules $\eta_t=\eta_0\varphi^{-t}$ systematically avoid late-iteration overfitting.
\item \textbf{Robustness across base learners:} Advantages persist across kernel ridge regression, spline smoothers, and random Fourier features.
\item \textbf{Variance reduction:} Rao--Blackwellized flows exhibit reduced variability while maintaining accuracy.
\end{enumerate}

\section{Philosophical Coda: Recursion, Memory, and Geometry of Learning}
\label{sec:coda}
The Fibonacci trilogy began as a simple observation: that the recurrence $F_{t+1} = F_t + F_{t-1}$ could be lifted from integers into function spaces \citep{fokoue2025fibonacci}. From this emerged a programme in three movements:
\begin{itemize}
\item Paper~I: Fibonacci Ensembles as recursive weighted constructions \citep{fokoue2025fibonacci}
\item Paper~II: Geometric weighting theory and approximation geometry \citep{fokoue2025weighting}
\item Paper~III: Fully dynamical theory of recursive ensemble flows
\end{itemize}

This final contribution completes the arc by showing that these recursions admit precise dynamical and stability theory. The ensemble becomes not merely a sum but the \emph{trace} of a dynamical system in function space—a learning flow with inertia, momentum, and memory.

The golden ratio $\varphi$ emerges not as mere numerology but as a genuine \emph{stability modulus} governing the balance between approximation power and generalization. Recursive ensembles operating below the golden threshold behave as stable learning flows with controlled capacity and robust generalization.

Looking forward, this framework suggests numerous extensions: higher-order recursions, nonlinear operators, stochastic flows, and connections with neural ODEs and gradient flows. The Fibonacci ensembles thus serve as a conceptual lens, inviting us to see learning machines as recursive, memory-rich systems whose deepest properties are encoded in their dynamics as much as in their components.

\bibliographystyle{apalike}
\bibliography{Fibonacci_Ensembles_References_Trilogy}

\appendix

\section{Complete Proofs of Convergence and Generalization}
\label{app:complete-proofs}

\subsection{Proof of Proposition~\ref{prop:fibonacci-spectrum}}

\begin{proof}
The characteristic polynomial of $A_{\mathrm{Fib}}$ is:
\[
\det\begin{pmatrix} 1-\lambda & 1 \\ 1 & -\lambda \end{pmatrix} 
= (1-\lambda)(-\lambda) - 1 = \lambda^2 - \lambda - 1 = 0.
\]
Solving yields:
\[
\lambda_{\pm} = \frac{1 \pm \sqrt{1 + 4}}{2} = \frac{1 \pm \sqrt{5}}{2}.
\]
Let $\varphi = \frac{1+\sqrt{5}}{2} \approx 1.618$. Then $\lambda_+ = \varphi$ and:
\[
\lambda_- = \frac{1-\sqrt{5}}{2} = -\frac{1}{\varphi} \approx -0.618.
\]
Since $|\varphi| > 1$ and $|-\frac{1}{\varphi}| < 1$, we have $\rho(A_{\mathrm{Fib}}) = \varphi$.
\end{proof}

\subsection{Proof of Theorem~\ref{thm:main}}

\begin{proof}
We proceed in five parts corresponding to claims (i)-(v).

\paragraph{Part (i): Boundedness}
Define the state vector $\mathbf{Z}_t = (F_t, F_{t-1})^\top \in \mathcal{H}^2$. The recursion becomes:
\[
\mathbf{Z}_{t+1} = A\mathbf{Z}_t + B_t h_t, \quad 
A = \begin{pmatrix} \beta & \gamma \\ 1 & 0 \end{pmatrix}, \quad
B_t = (\eta_t, 0)^\top.
\]
Since $\rho(A) < 1$, there exist $C > 0$ and $\rho \in (0,1)$ such that $\|A^k\| \leq C\rho^k$ for all $k \geq 0$. Iterating:
\[
\mathbf{Z}_{t+1} = A^t \mathbf{Z}_1 + \sum_{k=1}^t A^{t-k} B_k h_k.
\]
Taking norms:
\[
\|\mathbf{Z}_{t+1}\| \leq C\rho^t \|\mathbf{Z}_1\| + C\sum_{k=1}^t \rho^{t-k} \eta_k \|h_k\|.
\]
By (A3), $\|h_k\| \leq B$, and by (A2), $\sum \eta_k < \infty$. Thus:
\[
\sup_{t \geq 0} \|\mathbf{Z}_t\| \leq C\|\mathbf{Z}_1\| + CB\sum_{k=1}^\infty \eta_k < \infty.
\]
Since $\|F_t\| \leq \|\mathbf{Z}_t\|$, the sequence $(F_t)$ is bounded.

\paragraph{Part (ii): Convergence}
For $s > t$, using the representation:
\[
F_{s+1} - F_{t+1} = \pi_1\left(A^t(A^{s-t}-I)\mathbf{Z}_1\right) 
+ \sum_{k=1}^t \pi_1\left((A^{s-k}-A^{t-k})B_k h_k\right)
+ \sum_{k=t+1}^s \pi_1(A^{s-k}B_k h_k),
\]
where $\pi_1$ projects onto the first component. For any $\epsilon > 0$, choose $T$ large so that:
\begin{align*}
&C\rho^T \|\mathbf{Z}_1\| < \epsilon/3, \\
&CB\sum_{k \geq T} \eta_k < \epsilon/3, \\
&CB\sum_{k=1}^T \rho^{T-k}\eta_k < \epsilon/3.
\end{align*}
Then for $s > t \geq T$, $\|F_{s+1} - F_{t+1}\| < \epsilon$. Hence $(F_t)$ is Cauchy in the complete space $\mathcal{H}$, so $F_t \to F^\star$.

\paragraph{Part (iii): Risk Minimization}
Define the regularized functional:
\[
\mathcal{J}(f) = \widehat{R}_n(f) + \lambda \|f\|_{\mathcal{H}}^2.
\]
Under (A4), $h_t$ is a descent direction: $\langle \nabla \widehat{R}_n(F_t), h_t \rangle \leq -c\|\nabla \widehat{R}_n(F_t)\|$. 
A Taylor expansion gives:
\[
\mathcal{J}(F_{t+1}) \leq \mathcal{J}(F_t) - \alpha_t \|\nabla \mathcal{J}(F_t)\|^2 + \beta_t,
\]
with $\alpha_t = \Theta(\eta_t)$, $\beta_t = O(\eta_t^2)$. Summing from $t=0$ to $T-1$:
\[
\sum_{t=0}^{T-1} \alpha_t \|\nabla \mathcal{J}(F_t)\|^2 \leq \mathcal{J}(F_0) - \mathcal{J}(F_T) + \sum_{t=0}^{T-1} \beta_t.
\]
Since $\mathcal{J}$ is bounded below and $\sum \beta_t < \infty$, we have $\sum \alpha_t \|\nabla \mathcal{J}(F_t)\|^2 < \infty$, implying $\liminf \|\nabla \mathcal{J}(F_t)\| = 0$. By continuity and convergence $F_t \to F^\star$, we get $\nabla \mathcal{J}(F^\star) = 0$, i.e.:
\[
\nabla \widehat{R}_n(F^\star) + 2\lambda F^\star = 0.
\]
By strict convexity of $\mathcal{J}$, $F^\star$ is the unique minimizer.

\paragraph{Part (iv): Generalization Bound}
The hypothesis class after $T$ iterations is:
\[
\mathcal{F}_T = \left\{ F_T = \sum_{k=0}^{T-1} \alpha_{T,k} h_k : \|h_k\| \leq B \right\},
\]
where $|\alpha_{T,k}| \leq C\rho^{T-1-k}\eta_k$. The Rademacher complexity satisfies:
\[
\mathfrak{R}_n(\mathcal{F}_T) \leq \frac{B\kappa}{\sqrt{n}} \sum_{k=0}^{T-1} |\alpha_{T,k}| 
\leq \frac{BC\kappa}{\sqrt{n}} \sum_{k=0}^{\infty} \eta_k.
\]
Standard Rademacher bounds for Lipschitz losses yield with probability $1-\delta$:
\[
R(F^\star) - R(f_{\mathcal{H}}^\ast) \leq 2L\mathfrak{R}_n(\mathcal{F}_T) + 3\sqrt{\frac{\log(2/\delta)}{2n}}.
\]

\paragraph{Part (v): Golden-Ratio Stability}
For $\beta=\gamma=1$, the companion matrix has eigenvalues $\varphi$ and $-1/\varphi$. The homogeneous solution grows as $\varphi^t$. Consider the scaled state $\tilde{\mathbf{Z}}_t = \varphi^{-t} \mathbf{Z}_t$. Then:
\[
\tilde{\mathbf{Z}}_{t+1} = \varphi^{-1} A \tilde{\mathbf{Z}}_t + \varphi^{-(t+1)} B_t h_t.
\]
The spectral radius of $\varphi^{-1} A$ is $\max\{1, \varphi^{-2}\} = 1$, but with the forcing term scaled by $\varphi^{-(t+1)}$. If $\eta_t \leq C\varphi^{-t}$, then:
\[
\|\varphi^{-(t+1)} B_t h_t\| \leq C B \varphi^{-(2t+1)}.
\]
This double exponential decay ensures $\tilde{\mathbf{Z}}_t$ remains bounded, hence the original sequence is stable.
\end{proof}

\subsection{Proof of Corollary~\ref{cor:golden-stability}}

\begin{proof}
With $\eta_t \leq C\varphi^{-t}$, define $\tilde{A}_t = A - \eta_t D_t$, where $D_t$ encodes the regularization effect. For small $\eta_t$, the eigenvalues of $\tilde{A}_t$ are $\lambda_\pm + O(\eta_t)$. The dominant eigenvalue becomes:
\[
\tilde{\lambda}_+ = \varphi - \kappa\eta_t + O(\eta_t^2),
\]
for some $\kappa > 0$ depending on the curvature of $\mathcal{J}$. Since $\eta_t \leq C\varphi^{-t}$, we have:
\[
|\tilde{\lambda}_+| \leq \varphi - \kappa C\varphi^{-t} + O(\varphi^{-2t}) < 1
\]
for sufficiently large $t$ or small $C$. Thus $\rho(\tilde{A}_t) < 1$ eventually, ensuring stability.
\end{proof}

\subsection{Proof of Theorem~\ref{thm:generalization}}

\begin{proof}
We combine Rademacher complexity and algorithmic stability.

\paragraph{Step 1: Rademacher bound}
By Lemma~\ref{lem:alpha-representation} (Appendix C), $F_T = \sum_{k=0}^{T-1} \alpha_{T,k} h_k$ with $\sum_k |\alpha_{T,k}| \leq C_\alpha \sum_k \eta_k$. The empirical Rademacher complexity satisfies:
\[
\mathfrak{R}_n(\mathcal{F}_T) \leq \frac{\kappa}{\sqrt{n}} \sup_{F \in \mathcal{F}_T} \|F\|_{\mathcal{H}} 
\leq \frac{B\kappa C_\alpha}{\sqrt{n}} \sum_{k=0}^{\infty} \eta_k.
\]

\paragraph{Step 2: Stability bound}
Let $\mathcal{D}_n$ and $\mathcal{D}_n^{(i)}$ differ in the $i$-th sample. Let $F_t$ and $F_t^{(i)}$ be corresponding ensembles. The difference $\Delta_t = F_t - F_t^{(i)}$ satisfies:
\[
\Delta_{t+1} = \beta \Delta_t + \gamma \Delta_{t-1} + \eta_t (h_t - h_t^{(i)}).
\]
By standard RKHS stability, $\|h_t - h_t^{(i)}\| \leq \frac{L\kappa^2}{\lambda n}$. Solving the recursion:
\[
\|\Delta_T\| \leq \frac{L\kappa^2}{\lambda n} \sum_{k=0}^{T-1} \|A^{T-1-k}\| \eta_k 
\leq \frac{CL\kappa^2}{\lambda n} \sum_{k=0}^{\infty} \rho^{T-1-k} \eta_k.
\]
Thus the algorithm is uniformly stable with $\beta_T = O\left(\frac{1}{n}\sum_k \rho^{T-1-k}\eta_k\right)$.

\paragraph{Step 3: Combined bound}
By Theorem 12 of \cite{BartlettMendelson2002Rademacher}, for any $\delta > 0$, with probability $1-\delta$:
\[
R(F_T) \leq \widehat{R}_n(F_T) + 2L\mathfrak{R}_n(\mathcal{F}_T) + \beta_T + (4L\beta_T + M)\sqrt{\frac{\log(1/\delta)}{2n}},
\]
where $M$ bounds the loss. Substituting the bounds from Steps 1-2 yields the theorem.
\end{proof}

\section{Auxiliary Lemmas}
\label{app:lemmas}

\begin{lemma}[Linear Combination Representation]
\label{lem:alpha-representation}
Under the recursion $F_{t+1} = \beta F_t + \gamma F_{t-1} + \eta_t h_t$, there exist coefficients $\alpha_{T,k}$ such that:
\[
F_T = \sum_{k=0}^{T-1} \alpha_{T,k} h_k,
\]
with $|\alpha_{T,k}| \leq C \rho^{T-1-k} \eta_k$, where $\rho = \rho(A) + \epsilon$ for any $\epsilon > 0$.
\end{lemma}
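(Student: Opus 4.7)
The plan is to unroll the state-space form of the recursion that was already introduced in Section~2 and then read off the coefficients $\alpha_{T,k}$ from the resulting closed form. Writing $\mathbf{Z}_t=(F_t,F_{t-1})^\top$, $A=\begin{pmatrix}\beta & \gamma\\ 1 & 0\end{pmatrix}$, $B_t=(\eta_t,0)^\top$, and $e_1=(1,0)^\top$, the recursion becomes $\mathbf{Z}_{t+1}=A\mathbf{Z}_t+\eta_t h_t\,e_1$. With the initialization $F_0=0$, $F_1=\eta_0 h_0$, one has $\mathbf{Z}_1=\eta_0 h_0\,e_1$, so a direct induction gives
\[
\mathbf{Z}_T \;=\; \eta_0\,A^{T-1}e_1\,h_0 \;+\; \sum_{k=1}^{T-1}\eta_k\,A^{T-1-k}e_1\,h_k.
\]
Projecting onto the first component via $\pi_1:\mathcal{H}^2\to\mathcal{H}$ and setting $\alpha_{T,k}:=\eta_k\,e_1^{\!\top}A^{T-1-k}e_1=\eta_k(A^{T-1-k})_{11}$ yields the desired representation $F_T=\sum_{k=0}^{T-1}\alpha_{T,k}h_k$, where the $\alpha_{T,k}$ are scalars (they depend only on $A$, the index gap $T-1-k$, and the prescribed step $\eta_k$).

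The second step is the magnitude bound. Since $(A^m)_{11}=e_1^{\!\top}A^m e_1$, one has $|(A^m)_{11}|\le \|A^m\|_{\mathrm{op}}$. Gelfand's spectral radius formula then delivers, for any $\epsilon>0$, a constant $C=C(A,\epsilon)$ such that $\|A^m\|_{\mathrm{op}}\le C(\rho(A)+\epsilon)^m=C\rho^m$ for all $m\ge 0$. Applied to $m=T-1-k$, this gives $|\alpha_{T,k}|\le C\rho^{T-1-k}\eta_k$, which is exactly the bound claimed. In the diagonalizable situation — in particular the Fibonacci case $(\beta,\gamma)=(1,1)$, where $A$ has distinct eigenvalues $\varphi$ and $-1/\varphi$ — the argument is even more transparent: diagonalizing $A=P\Lambda P^{-1}$ gives $(A^m)_{11}=c_+\lambda_+^m+c_-\lambda_-^m$ with explicit constants from the eigenvector matrix, and the dominant term obeys $|(A^m)_{11}|\le(|c_+|+|c_-|)\rho(A)^m$, so one may even take $\epsilon=0$ and $\rho=\rho(A)$.

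The only genuine obstacle is the possibility that $A$ is defective, i.e. has a nontrivial Jordan block. In that case $\|A^m\|_{\mathrm{op}}$ picks up a polynomial prefactor $m^{d-1}$ with $d$ the size of the block, and the clean geometric bound $C\rho(A)^m$ fails; this is precisely why the statement allows the $+\epsilon$ slack in $\rho=\rho(A)+\epsilon$, since $m^{d-1}\rho(A)^m\le C_\epsilon(\rho(A)+\epsilon)^m$ for any $\epsilon>0$. An alternative and slightly cleaner route avoiding Jordan forms altogether is to endow $\mathbb{R}^2$ with an $\epsilon$-adapted norm $\|\cdot\|_\epsilon$ in which $\|A\|_\epsilon\le\rho(A)+\epsilon$; then $\|A^m\|_\epsilon\le(\rho(A)+\epsilon)^m$, and passing back to the Euclidean norm introduces only the equivalence constant absorbed into $C$. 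Either way one recovers $|\alpha_{T,k}|\le C\rho^{T-1-k}\eta_k$, which completes the proof.
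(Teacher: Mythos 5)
Your proof is correct and reaches the same conclusion, but by a slightly different route than the paper. The paper's own proof (Appendix B) proceeds by induction on $T$: it derives the coefficient recurrence $\alpha_{T+1,k}=\beta\alpha_{T,k}+\gamma\alpha_{T-1,k}$ with boundary values $\alpha_{T+1,T}=\eta_T$ and $\alpha_{T+1,T-1}=\beta\alpha_{T,T-1}$, and then asserts that the bound ``follows by solving this linear recurrence.'' You instead unroll the state-space form $\mathbf{Z}_{t+1}=A\mathbf{Z}_t+\eta_t h_t e_1$ to obtain the closed form $\alpha_{T,k}=\eta_k\,(A^{T-1-k})_{11}$, which is the explicit solution of that same recurrence (consistent, via Cayley--Hamilton, with $(A^{m+1})_{11}=\beta(A^m)_{11}+\gamma(A^{m-1})_{11}$), and then invoke Gelfand's formula $\|A^m\|\le C(\rho(A)+\epsilon)^m$. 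Your version buys two things the paper leaves implicit: an explicit identification of the coefficients, and a correct explanation of why the $+\epsilon$ slack in $\rho=\rho(A)+\epsilon$ is genuinely needed (the defective/Jordan-block case produces a polynomial prefactor $m^{d-1}$, which the $\epsilon$-adapted norm absorbs), together with the observation that $\epsilon=0$ suffices when $A$ is diagonalizable, as in the Fibonacci case. The paper's induction is more elementary but omits precisely this last step; your argument fills it in. No gaps.
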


\begin{proof}
By induction. For $T=1$: $F_1 = \eta_0 h_0$. Assume true for $T$ and $T-1$. Then:
\begin{align*}
F_{T+1} &= \beta F_T + \gamma F_{T-1} + \eta_T h_T \\
&= \beta \sum_{k=0}^{T-1} \alpha_{T,k} h_k + \gamma \sum_{k=0}^{T-2} \alpha_{T-1,k} h_k + \eta_T h_T.
\end{align*}
Thus $\alpha_{T+1,k} = \beta \alpha_{T,k} + \gamma \alpha_{T-1,k}$ for $k \leq T-2$, $\alpha_{T+1,T-1} = \beta \alpha_{T,T-1}$, and $\alpha_{T+1,T} = \eta_T$. The bound follows by solving this linear recurrence.
\end{proof}

\begin{lemma}[Descent Inequality]
\label{lem:descent}
Assume $\ell$ is $L$-Lipschitz and $\nabla \ell$ is $M$-Lipschitz. Then:
\[
\mathcal{J}(F_{t+1}) \leq \mathcal{J}(F_t) - \eta_t \left(c - \frac{L^2\eta_t}{2\lambda}\right) \|\nabla \widehat{R}_n(F_t)\|^2 + \frac{M\eta_t^2}{2} \|h_t\|^2.
\]
\end{lemma}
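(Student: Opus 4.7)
The plan is to derive the descent inequality through a standard smoothness-based Taylor expansion of $\mathcal{J}$ around $F_t$, combined with the weak-descent assumption (A4) and the Tikhonov-regularized residual-fitting characterization of $h_t$. First, I would use the $M$-Lipschitz property of $\nabla \ell$ together with the reproducing-kernel bound $K(x,x)\le \kappa^2$ to show that $\widehat{R}_n$ is smooth on $\mathcal{H}$ with modulus proportional to $M\kappa^2$; adding the quadratic Tikhonov term, $\mathcal{J}$ admits the descent-lemma upper bound
\[
\mathcal{J}(F_{t+1}) \;\le\; \mathcal{J}(F_t) + \langle \nabla \mathcal{J}(F_t), F_{t+1}-F_t\rangle_{\mathcal{H}} + \tfrac{M'}{2}\,\|F_{t+1}-F_t\|_{\mathcal{H}}^2,
\]
with $M' = M\kappa^2 + 2\lambda$.

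Second, I would substitute the recursion increment $F_{t+1}-F_t = (\beta-1)F_t + \gamma F_{t-1} + \eta_t h_t$ and isolate the genuinely productive term $\eta_t\langle \nabla \mathcal{J}(F_t), h_t\rangle_{\mathcal{H}}$. The cross term with $\nabla \widehat{R}_n(F_t)$ is handled by (A4), while the Tikhonov contribution $\langle 2\lambda F_t, h_t\rangle$ is absorbed into the constant $c$ after renormalization; this converts the linear term into $-c\eta_t \|\nabla \widehat{R}_n(F_t)\|^2$, matching the form in the conclusion.

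Third, the quadratic remainder $\tfrac{M'}{2}\|F_{t+1}-F_t\|_{\mathcal{H}}^2$ contributes the stated $\tfrac{M\eta_t^2}{2}\|h_t\|_{\mathcal{H}}^2$ term directly. To produce the additional $\tfrac{L^2\eta_t^2}{2\lambda}\|\nabla \widehat{R}_n(F_t)\|^2$ adjustment (with the opposite sign, tightening the descent), I would invoke the fact that $h_t$ is itself the solution of a $\lambda$-regularized least-squares residual fit: the $L$-Lipschitz property of $\ell$ bounds the residuals by $L$ in an $L^2(\rho_n)$ sense, and a standard RKHS representer-style computation then yields $\|h_t\|_{\mathcal{H}} \le (L/\sqrt{\lambda})\|\nabla \widehat{R}_n(F_t)\|$, so one power of $\|h_t\|^2$ can be traded for $\|\nabla \widehat{R}_n(F_t)\|^2$ with the stated $L^2/(2\lambda)$ coefficient.

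The main obstacle is the presence of the non-gradient increments $(\beta-1)F_t + \gamma F_{t-1}$, which have no analog in classical first-order boosting and whose inner products with $\nabla \mathcal{J}(F_t)$ are not a priori of favourable sign. The cleanest resolution is to work in the continuous-time scaling regime of Section~\ref{sec:recursive-flows}, where $\beta - 1$ and $\gamma$ are themselves $O(\eta_t)$, so that the spurious cross terms $\langle \nabla \mathcal{J}(F_t), (\beta-1)F_t + \gamma F_{t-1}\rangle$ and their quadratic counterparts are $O(\eta_t^2)$ and can be merged into the $\tfrac{M\eta_t^2}{2}\|h_t\|^2$ remainder (using the boundedness $\sup_t \|F_t\|_{\mathcal{H}} < \infty$ from Theorem~\ref{thm:main}(i)); alternatively, one can work with the orthogonalized variant (Algorithm~\ref{alg:orthogonal-fibonacci}), where the projection onto $\mathrm{span}_t$ cancels these contributions by construction.
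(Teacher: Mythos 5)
Your proposal follows essentially the same route as the paper's proof: a second-order Taylor/smoothness expansion of $\mathcal{J}$ around $F_t$, the weak-descent condition (A4) to control the linear term, and the quadratic remainder supplying the $\tfrac{M\eta_t^2}{2}\|h_t\|^2$ and $\tfrac{L^2\eta_t^2}{2\lambda}\|\nabla\widehat{R}_n(F_t)\|^2$ corrections. Where you genuinely improve on the paper is in confronting the fact that for the second-order recursion $F_{t+1}-F_t=(\beta-1)F_t+\gamma F_{t-1}+\eta_t h_t$ rather than $\eta_t h_t$; the paper's proof silently substitutes $F_{t+1}-F_t=\eta_t h_t$ into the expansion, which is only valid for first-order boosting, whereas you explicitly invoke the continuous-time scaling $\beta-1,\gamma=O(\eta_t)$ (or orthogonalization) to push the extra cross terms into the $O(\eta_t^2)$ remainder. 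Your representer-style bound $\|h_t\|_{\mathcal{H}}\le (L/\sqrt{\lambda})\|\nabla\widehat{R}_n(F_t)\|$ is also a concrete mechanism for the $L^2/(2\lambda)$ coefficient, which the paper asserts without derivation. Two loose ends remain, but they are inherited from the paper rather than introduced by you: (A4) bounds $\langle\nabla\widehat{R}_n(F_t),h_t\rangle$ by $-c\|\nabla\widehat{R}_n(F_t)\|$ (first power), yet both your argument and the paper's conclusion require the squared norm, so either (A4) needs restating or a separate lower bound on $\|\nabla\widehat{R}_n(F_t)\|$ is needed; and the Tikhonov cross term $2\lambda\eta_t\langle F_t,h_t\rangle$ has no a priori favourable sign and cannot simply be ``absorbed into $c$'' without an additional assumption relating $F_t$, $h_t$, and the gradient.
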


\begin{proof}
By Taylor expansion and the Lipschitz conditions:
\begin{align*}
\mathcal{J}(F_{t+1}) &= \mathcal{J}(F_t) + \langle \nabla \mathcal{J}(F_t), F_{t+1} - F_t \rangle \\
&\quad + \frac{1}{2} \langle \nabla^2 \mathcal{J}(\xi)(F_{t+1} - F_t), F_{t+1} - F_t \rangle \\
&\leq \mathcal{J}(F_t) + \eta_t \langle \nabla \mathcal{J}(F_t), h_t \rangle \\
&\quad + \frac{L^2\eta_t^2}{2\lambda} \|\nabla \widehat{R}_n(F_t)\|^2 + \frac{M\eta_t^2}{2} \|h_t\|^2.
\end{align*}
Using (A4) and $\nabla \mathcal{J}(F_t) = \nabla \widehat{R}_n(F_t) + 2\lambda F_t$ gives the result.
\end{proof}

\section{Spectral Analysis Details}
\label{app:spectral}

\begin{proposition}[Stability Radius Calculation]
For $A = \begin{pmatrix} \beta & \gamma \\ 1 & 0 \end{pmatrix}$, the spectral radius is:
\[
\rho(A) = \max\left\{ \left|\frac{\beta + \sqrt{\beta^2 + 4\gamma}}{2}\right|, 
\left|\frac{\beta - \sqrt{\beta^2 + 4\gamma}}{2}\right| \right\}.
\]
The condition $\rho(A) < 1$ is equivalent to $|\beta| + |\gamma| < 1$ when $\beta, \gamma \geq 0$.
\end{proposition}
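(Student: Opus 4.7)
The plan is to carry out a direct eigenvalue calculation followed by elementary algebraic manipulation; no heavy machinery is required. First, I would expand the characteristic polynomial as $\det(A - \lambda I) = (\beta-\lambda)(-\lambda) - \gamma = \lambda^2 - \beta\lambda - \gamma$, apply the quadratic formula to obtain the roots $\lambda_\pm = (\beta \pm \sqrt{\beta^2+4\gamma})/2$, and invoke the definition $\rho(A) = \max\{|\lambda_+|,\,|\lambda_-|\}$ to obtain the first displayed identity. This part mirrors the computation already carried out for the Fibonacci case $(\beta,\gamma)=(1,1)$ in Proposition~\ref{prop:fibonacci-spectrum}.

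For the equivalence claim under the sign restriction $\beta, \gamma \geq 0$, I would first note that $\beta^2 + 4\gamma \geq 0$, so both eigenvalues are real. Since $\sqrt{\beta^2+4\gamma} \geq \sqrt{\beta^2} = \beta$, one has $\lambda_+ \geq 0$ and $\lambda_- \leq 0$. Using Vieta's relation $\lambda_+ + \lambda_- = \beta$, the comparison $\lambda_+ - |\lambda_-| = \lambda_+ + \lambda_- = \beta \geq 0$ shows that $\lambda_+$ dominates in absolute value, so $\rho(A) = \lambda_+ = (\beta + \sqrt{\beta^2+4\gamma})/2$.

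The inequality $\rho(A) < 1$ then reads $\sqrt{\beta^2 + 4\gamma} < 2 - \beta$. I would observe that if $\beta \geq 2$ the right-hand side is nonpositive and the inequality fails, whereas the target condition $\beta + \gamma < 1$ forces $\beta \leq \beta + \gamma < 1 < 2$; so both sides are nonnegative and squaring preserves the inequality in both directions. Squaring and simplifying yields $\beta^2 + 4\gamma < 4 - 4\beta + \beta^2$, which collapses to $\beta + \gamma < 1$, i.e.\ $|\beta| + |\gamma| < 1$ under the sign assumption. The main (minor) obstacle is ensuring the squaring step is legitimate in both directions and justifying that $\lambda_+$ dominates $|\lambda_-|$; both are elementary but should be stated explicitly rather than glossed over, especially since the formula would fail without the non-negativity assumption on $\beta$ and $\gamma$.
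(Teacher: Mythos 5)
Your proposal is correct and follows essentially the same route as the paper's proof: compute the characteristic polynomial $\lambda^2-\beta\lambda-\gamma$, identify $\lambda_+=(\beta+\sqrt{\beta^2+4\gamma})/2$ as the dominant eigenvalue for nonnegative coefficients, and show $\lambda_+<1$ is equivalent to $\beta+\gamma<1$. You supply the details the paper glosses over (the Vieta argument that $\lambda_+\ge|\lambda_-|$ and the legitimacy of squaring in both directions), which is a welcome tightening rather than a different approach.
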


\begin{proof}
The eigenvalues are solutions to $\lambda^2 - \beta\lambda - \gamma = 0$. The spectral radius is the maximum absolute value. For nonnegative coefficients, the dominant eigenvalue is $\frac{\beta + \sqrt{\beta^2 + 4\gamma}}{2} < 1$ iff $\beta + \gamma < 1$.
\end{proof}

\begin{proposition}[Power Bounds]
If $\rho(A) < 1$, then for any $\epsilon > 0$, there exists $C > 0$ such that:
\[
\|A^k\| \leq C (\rho(A) + \epsilon)^k, \quad \forall k \geq 0.
\]
In particular, for the Fibonacci matrix with $\eta_t = O(\varphi^{-t})$, we have:
\[
\|A_{\mathrm{Fib}}^k \eta_{t-k} h_{t-k}\| \leq C B \varphi^{-t} (\varphi \cdot \varphi^{-1})^k = C B \varphi^{-t}.
\]
\end{proposition}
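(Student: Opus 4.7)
The plan is to prove the two claims separately. The first is the classical spectral-radius bound $\|A^k\| \le C(\rho(A)+\epsilon)^k$, which is essentially a consequence of Gelfand's formula $\rho(A) = \lim_{k\to\infty}\|A^k\|^{1/k}$ made quantitative; the second is a specific computation for the Fibonacci companion matrix that reuses Proposition~\ref{prop:fibonacci-spectrum}.

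For the first claim, I would proceed via the Jordan decomposition. Write $A = P J P^{-1}$, with $J = D + N$, where $D$ is diagonal carrying the eigenvalues $\lambda_i$ and $N$ is a nilpotent matrix commuting with $D$ blockwise, with $N^d = 0$ (where $d$ is the ambient dimension). The binomial expansion then yields
\[
J^k = \sum_{j=0}^{d-1} \binom{k}{j}\, D^{k-j} N^j,
\]
so that, in operator norm,
\[
\|J^k\| \le \sum_{j=0}^{d-1} \binom{k}{j}\, \rho(A)^{k-j}\, \|N\|^j \le P_{d-1}(k)\, \rho(A)^k,
\]
where $P_{d-1}$ is a polynomial of degree at most $d-1$. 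Since $P_{d-1}(k)\, (\rho(A)/(\rho(A)+\epsilon))^k \to 0$ as $k\to\infty$, the polynomial factor can be absorbed into a constant, giving $\|J^k\| \le C'(\rho(A)+\epsilon)^k$. Transferring back through the similarity, one obtains $\|A^k\| \le C\,(\rho(A)+\epsilon)^k$ with $C = C'\cdot \|P\|\cdot\|P^{-1}\|$. An alternative route via the existence of an adapted inner-product norm making $\|A\|_* \le \rho(A)+\epsilon$ would lead to the same conclusion and is conceptually cleaner, though it hides the role of conditioning.

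For the second claim, the Fibonacci matrix $A_{\mathrm{Fib}}$ has $\rho(A_{\mathrm{Fib}}) = \varphi > 1$, so the first part does not apply directly. However, Proposition~\ref{prop:fibonacci-spectrum} supplies distinct real eigenvalues $\varphi$ and $-1/\varphi$, so $A_{\mathrm{Fib}}$ is diagonalizable as $A_{\mathrm{Fib}} = P\,\mathrm{diag}(\varphi,-1/\varphi)\,P^{-1}$, and consequently $\|A_{\mathrm{Fib}}^k\| \le C_1\,\varphi^k$. Combined with $\eta_{t-k} \le C_2\,\varphi^{-(t-k)}$ and $\|h_{t-k}\|\le B$, the bound follows by direct multiplication once the forcing term is expanded in the eigenbasis, so that the stable component $(-1/\varphi)^k$ contributes the compensating factor needed to keep the product controlled.

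The main obstacle I anticipate is twofold. First, one must pay attention to how the constant in the Jordan argument depends on the conditioning of $P$ and on $\epsilon$, since blow-up as $\epsilon \downarrow 0$ is unavoidable when $A$ has a nontrivial Jordan block. Second, and more delicately, the arithmetic of the \emph{in particular} clause requires care: a naive product $\varphi^k \cdot \varphi^{-(t-k)}$ equals $\varphi^{2k-t}$, not $\varphi^{-t}$, so the stated equality $(\varphi\cdot\varphi^{-1})^k = 1$ must be read as an eigenbasis-level identity rather than an operator-norm one. Resolving this rigorously means projecting the forcing onto the $\varphi$- and $(-1/\varphi)$-eigenspaces separately and tracking each component—only then does the apparent cancellation become a genuine bound on the contribution of each summand to the recursion.
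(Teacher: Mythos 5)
Your treatment of the first claim is correct and in fact more complete than the paper's, which disposes of it with a one-line appeal to the Gelfand spectral radius formula; the Jordan-block expansion $J^k=\sum_j\binom{k}{j}D^{k-j}N^j$ with the polynomial factor absorbed into the constant (or, equivalently, the adapted-norm construction) is the standard quantitative version of that formula, and your caveat about the blow-up of $C$ as $\epsilon\downarrow 0$ in the presence of nontrivial Jordan blocks is apt, if moot here since $A_{\mathrm{Fib}}$ has distinct eigenvalues.

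The genuine gap is in the second claim, and your proposed repair does not close it. You correctly diagnose that $\varphi^k\cdot\varphi^{-(t-k)}=\varphi^{2k-t}\neq\varphi^{-t}$, so the displayed inequality cannot come from multiplying $\|A_{\mathrm{Fib}}^k\|\le C_1\varphi^k$ by $\eta_{t-k}\le C_2\varphi^{-(t-k)}$ and $\|h_{t-k}\|\le B$. But the suggested cure---project the forcing onto the eigenspaces and let the stable component supply the ``compensating factor''---fails, because the forcing enters the state-space recursion through $B_k=(\eta_k,0)^{\top}$, and $e_1=(1,0)^{\top}=\tfrac{1}{\sqrt{5}}(\varphi,1)^{\top}-\tfrac{1}{\sqrt{5}}(-1/\varphi,1)^{\top}$ has a \emph{nonzero} coefficient $1/\sqrt{5}$ on the unstable eigenvector. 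The stable component alone does give $| -1/\varphi|^k\,\varphi^{-(t-k)}=\varphi^{-t}$, which is exactly the claimed bound; the unstable component, however, contributes a term of genuine size $\asymp\varphi^{2k-t}$ that no eigenbasis bookkeeping can remove. This is precisely what the paper's own proof computes---it arrives at $C''B\varphi^{-t}\varphi^{2k}$ and then asserts, without justification, that ``combined with summability conditions, the overall series converges,'' which does not establish the displayed inequality (and the corresponding series $\sum_k\varphi^{2k-t}$ over $0\le k\le t$ is in fact dominated by its largest term, of order $\varphi^{t}$). So your diagnosis of the arithmetic is right, your cure is not, and the proposition's second display is not provable as stated; the defect lies in the statement itself and is shared by the paper's proof.
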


\begin{proof}
The first statement is the Gelfand spectral radius formula. For the Fibonacci case:
\[
\|A_{\mathrm{Fib}}^k \eta_{t-k} h_{t-k}\| \leq C \varphi^k \cdot C' \varphi^{-(t-k)} \cdot B = C'' B \varphi^{-t} \varphi^{2k}.
\]
Since $\varphi^{2k}$ grows, but combined with summability conditions, the overall series converges.
\end{proof}

\section{Algorithmic Stability Analysis}
\label{app:stability}

\begin{theorem}[Uniform Stability of Fibonacci Ensembles]
Under assumptions (A1)-(A3), the Fibonacci ensemble algorithm is $\beta_T$-uniformly stable with:
\[
\beta_T \leq \frac{2L^2\kappa^2}{\lambda n} \sum_{k=0}^{T-1} \eta_k \|A^{T-1-k}\|.
\]
If $\eta_k = O(\rho^k)$ with $\rho < 1$, then $\beta_T = O\left(\frac{\rho^T}{n}\right)$.
\end{theorem}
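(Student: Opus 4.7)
The strategy is to track how a one-sample perturbation of the training set propagates through the second-order recursion, then convert the resulting RKHS perturbation into a loss-level stability bound. First I would fix two datasets $\mathcal{D}_n$ and $\mathcal{D}_n^{(i)}$ differing only in the $i$-th example, run the Fibonacci ensemble on each, and set $\Delta_t := F_t - F_t^{(i)}$. By linearity of the update, $\Delta_t$ satisfies the same second-order recursion as $F_t$ but with forcing term $\eta_t(h_t - h_t^{(i)})$; embedding this in the state vector $\tilde{\mathbf{Z}}_t = (\Delta_t,\Delta_{t-1})^\top$ and unrolling against the companion matrix $A$ gives
\[
\tilde{\mathbf{Z}}_T \;=\; \sum_{k=0}^{T-1} A^{T-1-k} B_k\,(h_k - h_k^{(i)}),
\]
so that projecting onto the first coordinate yields $\|\Delta_T\|_{\mathcal{H}} \leq \sum_{k=0}^{T-1} \eta_k \|A^{T-1-k}\|\,\|h_k - h_k^{(i)}\|_{\mathcal{H}}$.

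Next I would invoke the classical RKHS stability bound for Tikhonov-regularized empirical risk minimization with an $L$-Lipschitz convex loss to control each per-iteration base-learner perturbation, $\|h_k - h_k^{(i)}\|_{\mathcal{H}} \leq L\kappa^2/(\lambda n)$; this is the standard Bousquet--Elisseeff leave-one-out estimate already invoked in the proof of the generalization theorem above. Combining this with the pointwise Lipschitz bound $|\ell(F_T(x),y) - \ell(F_T^{(i)}(x),y)| \leq L\kappa\|\Delta_T\|_{\mathcal{H}}$ (Lipschitz loss composed with the reproducing-kernel evaluation inequality) and absorbing an extra constant for the standard passage between remove-one and replace-one conventions yields exactly the claimed
\[
\beta_T \;\leq\; \frac{2L^2\kappa^2}{\lambda n}\sum_{k=0}^{T-1}\eta_k\,\|A^{T-1-k}\|.
\]

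For the geometric corollary, I would apply the Gelfand-type power bound from the spectral appendix to estimate $\|A^{T-1-k}\| \leq C_A(\rho(A)+\varepsilon)^{T-1-k}$ for arbitrarily small $\varepsilon>0$. Together with $\eta_k \leq C_\eta \rho^k$ this reduces the sum to a one-dimensional geometric series of common ratio $\rho/(\rho(A)+\varepsilon)$; choosing $\varepsilon$ small enough that $\rho(A)+\varepsilon \leq \rho$ (compatible with the hypothesis $\rho<1$ under (A1)) collapses the series to $O(\rho^T)$ and gives $\beta_T = O(\rho^T/n)$.

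\textbf{Main obstacle.} The delicate point is the coupling between $h_k$ and the past iterates: the residual on which $h_k$ is trained depends on $F_k$ itself, not only on the data, so $h_k - h_k^{(i)}$ contains both a data-stability contribution of order $1/n$ and a propagation contribution proportional to $\|\Delta_k\|_{\mathcal{H}}$. A fully rigorous argument must therefore either bound the second piece by a constant $L_h$ (independent of $T$) times $\|\Delta_k\|_{\mathcal{H}}$ and close a discrete Gronwall inequality for $\|\Delta_T\|_{\mathcal{H}}$, or appeal to a composite stability lemma that absorbs both effects into a single constant. The Gronwall route preserves the $\rho^T$ decay as long as $L_h$ is strictly smaller than $1-\rho(A)$, a condition consistent with the boundedness hypothesis (A3) when the base-learner regularization parameter is not too small.
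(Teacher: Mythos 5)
Your proposal follows essentially the same route as the paper's proof: unroll the perturbation recursion $\Delta_{t+1}=A\Delta_t+B_t(h_t-h_t')$ against the companion matrix to get $\Delta_T=\sum_{k=0}^{T-1}A^{T-1-k}B_k(h_k-h_k')$, bound each base-learner perturbation by the standard $O(L\kappa^2/(\lambda n))$ RKHS-stability estimate for regularized ERM, and compose with the Lipschitz loss. The paper in fact stops there, neither proving the geometric-decay claim nor acknowledging that $h_k$ depends on $F_k$ through the residuals, so your geometric-series step and your Gronwall remark on the propagation term are, if anything, more careful than the published argument.
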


\begin{proof}
Let $\mathcal{D}_n$ and $\mathcal{D}_n'$ differ in one sample. The base learner difference satisfies:
\[
\|h_t - h_t'\| \leq \frac{2L\kappa^2}{\lambda n},
\]
by standard stability of regularized ERM in RKHS. The ensemble difference propagates as:
\[
\Delta_{t+1} = A \Delta_t + B_t (h_t - h_t'), \quad \Delta_0 = 0.
\]
Solving:
\[
\Delta_T = \sum_{k=0}^{T-1} A^{T-1-k} B_k (h_k - h_k').
\]
Thus:
\[
\|\Delta_T\| \leq \sum_{k=0}^{T-1} \|A^{T-1-k}\| \eta_k \|h_k - h_k'\|
\leq \frac{2L\kappa^2}{\lambda n} \sum_{k=0}^{T-1} \|A^{T-1-k}\| \eta_k.
\]
Lipschitzness of $\ell$ gives the stability bound.
\end{proof}

\section{Continuous-Time Limit Derivation}
\label{app:continuous}

\begin{proposition}[ODE Limit]
Let $\Delta t = 1/N$, and scale parameters as:
\[
\beta = 1 + a\Delta t, \quad \gamma = b\Delta t, \quad \eta_t = c\Delta t.
\]
Then as $N \to \infty$, the recursion $F_{t+1} = \beta F_t + \gamma F_{t-1} + \eta_t h_t$ converges weakly to the solution of:
\[
\frac{d^2 F}{dt^2} = a \frac{dF}{dt} + b F + c G(t),
\]
where $G(t)$ is the continuous-time limit of $h_t$.
\end{proposition}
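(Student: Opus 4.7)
The plan is to identify $F_k$ with $F(\tau_k)$ at lattice points $\tau_k = k\Delta t$ for a smooth candidate limit $F(\tau)$, Taylor expand $F_{k+1}$ and $F_{k-1}$ about $\tau_k$, substitute into the recursion under the prescribed scaling, and match orders in $\Delta t$. Using
$F_{k\pm 1} = F(\tau_k) \pm \Delta t\,\dot F(\tau_k) + \tfrac{1}{2}(\Delta t)^2 \ddot F(\tau_k) + O((\Delta t)^3)$
and inserting into $F_{k+1} = \beta F_k + \gamma F_{k-1} + c\Delta t\, h_k$ produces the formal consistency relation between the discrete scheme and the limiting differential equation.

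To isolate the inertial contribution cleanly, I would first rewrite the recursion in centered form,
$F_{k+1} - 2F_k + F_{k-1} = (\beta-2)F_k + (\gamma+1)F_{k-1} + \eta_t h_k$,
so that the left-hand side is the symmetric second difference $(\Delta t)^2\ddot F(\tau_k) + O((\Delta t)^4)$. The hard part is the algebraic matching: under the stated scaling $\beta - 2 = a\Delta t - 1$ and $\gamma + 1 = 1 + b\Delta t$, the right-hand side carries dominant $O(1)$ pieces that collapse to $-(F_k - F_{k-1}) = -\Delta t\,\dot F(\tau_k) + O((\Delta t)^2)$, so a naive leading-order balance would yield a first-order equation $\dot F = (a+b)F + cG$ rather than the claimed second-order one. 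Reconciling this requires either a refined distinguished limit (for instance $\beta = 2 - \tilde a(\Delta t)^2$, $\gamma = -1 + \tilde b(\Delta t)^2$, $\eta_t = \tilde c (\Delta t)^2$), or treating the $O(\Delta t)$ drift as part of a change of variables that absorbs the first-order terms before the inertial correction becomes visible at the next order. Pinning down the correct scaling is the principal technical step on which the entire proposition hinges, and I would state it explicitly in the hypotheses of the refined proposition.

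Once a consistent scaling is fixed, the weak-convergence assertion follows from a Lax--Richtmyer-type argument: consistency plus stability implies convergence. For stability I would invoke the spectral control of the companion matrix established in Section~\ref{sec:recursive-flows}, together with the boundedness assumption $\|h_k\|_\mathcal{H} \le B$, to show that the piecewise-linear interpolants $F_\Delta(\cdot)$ are uniformly bounded and equicontinuous on any compact interval $[0,T]$. Arzel\`a--Ascoli then extracts a subsequential limit in $C([0,T];\mathcal{H})$, and uniqueness of the solution to the limiting linear ODE (given the fixed forcing $G$) upgrades this to convergence of the full family $F_\Delta \to F$. If $(h_k)$ is random, equicontinuity is replaced by tightness in $D([0,T];\mathcal{H})$ and the limit is identified via a martingale-problem formulation, which recovers the deterministic ODE whenever $h_k$ is conditionally unbiased for $G$ (as in the Rao--Blackwellized variant of Algorithm~\ref{alg:rao-blackwell-fibonacci}).
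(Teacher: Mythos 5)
Your analysis is essentially correct, and it is in fact more careful than the paper's own proof, which contains the very error you identify. The paper rewrites the recursion as $F_{t+1}-2F_t+F_{t-1}=(\beta-2+\gamma)F_t+\gamma(F_{t-1}-F_t)+\eta_t h_t$, but that identity is false: expanding the right-hand side gives $(\beta-2)F_t+\gamma F_{t-1}+\eta_t h_t$, whereas the true centered form is the one you wrote, $(\beta-2)F_t+(\gamma+1)F_{t-1}+\eta_t h_t$; the dropped $F_{t-1}$ is precisely what conceals the $O(1)$ pieces you flag. Even taking the paper's displayed expression at face value, after division by $(\Delta t)^2$ the terms $\frac{(a+b)\Delta t}{(\Delta t)^2}F_t$ and $\frac{c\Delta t}{(\Delta t)^2}h_t$ are of order $1/\Delta t$ and diverge, yet the proof asserts they converge to $bF$ and $cG$. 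Your diagnosis---that under the stated scaling the dominant balance occurs at order $1/\Delta t$ and produces the first-order equation $\dot F=(a+b)F+cG$, so that a genuine second-order limit requires a distinguished scaling with $\beta\to 2$, $\gamma\to -1$, and $\eta_t=O((\Delta t)^2)$---is correct and identifies a real defect in the proposition as stated, not merely in its proof. Your proposed completion (consistency of the corrected scheme, stability, Arzel\`a--Ascoli compactness of the interpolants, and uniqueness of the limiting ODE to upgrade subsequential to full convergence) is a legitimate and substantially more rigorous route than the paper's purely formal Taylor matching, which never engages with the asserted weak convergence at all.

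Two points to nail down before promoting your sketch to a theorem. First, commit to an explicit corrected scaling, e.g.\ $\beta=2+a\Delta t$, $\gamma=-1-a\Delta t+b(\Delta t)^2$, $\eta_t=c(\Delta t)^2$, for which $(\beta-2)F_t+(\gamma+1)F_{t-1}=a\Delta t\,(F_t-F_{t-1})+b(\Delta t)^2F_{t-1}$ and the division by $(\Delta t)^2$ yields exactly $a\dot F+bF+cG$ with no divergent remainder. Second, note that in this regime the companion matrix has a double eigenvalue approaching $1$, so $\rho(A)\to 1$ and the geometric bound $\|A^k\|\le C\rho^k$ with $\rho<1$ used elsewhere in the paper is unavailable; the stability half of your Lax--Richtmyer argument therefore needs a uniform power-boundedness (Kreiss-type or energy) estimate on $[0,T]$ rather than an appeal to the spectral-radius condition (A1).
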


\begin{proof}
Write $F_{t+1} - 2F_t + F_{t-1} = (\beta-2+\gamma)F_t + \gamma(F_{t-1}-F_t) + \eta_t h_t$. Dividing by $(\Delta t)^2$:
\[
\frac{F_{t+1} - 2F_t + F_{t-1}}{(\Delta t)^2} 
= \frac{a\Delta t + b\Delta t}{(\Delta t)^2} F_t 
+ \frac{b\Delta t(F_{t-1}-F_t)}{(\Delta t)^2}
+ \frac{c\Delta t h_t}{(\Delta t)^2}.
\]
As $\Delta t \to 0$, the left side converges to $\ddot{F}(t)$, and the right side to $a\dot{F}(t) + bF(t) + cG(t)$, using $F_{t-1}-F_t \approx -\dot{F}(t)\Delta t$.
\end{proof}

\end{document}